\newtheorem{theorem}{Theorem}
\newtheorem{lemma}[theorem]{Lemma}
\newtheorem{proposition}[theorem]{Proposition}
\newtheorem{corollary}{Corollary}
\newtheorem{assumption}{Assumption}
\newcommand{\eproof}{$\null\hfill\blacksquare$}
\renewenvironment{proof}{\par\noindent{\bf Proof:\ }}{\eproof}
\theoremstyle{definition}
\newtheorem{definition}{Definition}[section]
\newcommand{\lb}{\text{LB}}
\newcommand{\ub}{\text{UB}}
\newcommand{\gbar}{\bar{g}}
\newcommand{\erralg}{\bar{\epsilon}}
\newcommand{\errebstop}{\epsilon}
\newcommand{\errstates}{\epsilon_{\nsampledstates}}
\newcommand{\gammaprod}{p_{\gamma}}
\newcommand{\dmu}{{d^\mu}}
\newcommand{\States}{\mathcal{S}}
\newcommand{\E}{\mathbb{E}}
\def\defeq{\buildrel \rm def \over =}
\newcommand{\A}{\mathcal{A}}
\newcommand{\Ss}{\mathcal{S}}
\newcommand{\R}{\mathbb{R}}
\newcommand{\RR}{\mathbb{R}}
\renewcommand{\cite}[1]{\citep{#1}}
\newcommand{\floor}[1]{\left\lfloor #1 \right\rfloor}
\newcommand{\ceil}[1]{\left\lceil #1 \right\rceil}
\newcommand{\sign}[1]{\mathrm{sign}(#1)}
\newcommand{\nsampledstates}{m}
\newcommand{\ntrajectories}{t}
\newcommand{\lenrollout}{l}
\newcommand{\vtrue}{v^*}
\newcommand{\vsampled}{\bar{v}}
\newcommand{\vsampledtrue}{\bar{v}^*}
\newcommand{\vapprox}{\hat{v}}
\newcommand{\stda}{\bar{\sigma}^{(r)}_{\ntrajectories}}
\newcommand{\rmax}{R_{\text{max}}}
\newcommand{\vmax}{V_{\text{max}}}
\newcommand{\relerror}{\tau}
\newcommand{\cliperror}{c}
\newcommand{\clipfs}{\ell_{c}}
\newcommand{\clipf}{\ell}
\newcommand{\clipfhat}{\hat{\ell}}
\newcommand{\losstrue}{\clipf} 
\newcommand{\lossapprox}{\clipfhat}
\newcommand{\CMAVE}{\clipfs}
\newcommand{\CMSVE}{\clipfs} 
\newcommand{\CMAVEtrue}{\losstrue}
\newcommand{\CMSVEtrue}{\losstrue}
\newcommand{\CMAVEapprox}{\lossapprox}
\newcommand{\CMSVEapprox}{\lossapprox}
\newcommand{\epsilonm}{\epsilon_\nsampledstates}
\newcommand{\weighting}{d}
\newcommand{\minc}[1]{\min\left(\cliperror, #1 \right)}
\newcommand{\inv}{{\raisebox{.2ex}{$\scriptscriptstyle-1$}}}
\newcommand{\nexperiments}{K}
\title{High-confidence error estimates for learned value functions} 
\author{Touqir Sajed\\
Department of Computing Science\\
University of Alberta, Canada\\
\texttt{touqir@ualberta.ca}
\And 
Wesley Chung\\
Department of Computing Science\\
University of Alberta, Canada\\
\texttt{wchung@ualberta.ca}
\And Martha White\\
Department of Computing Science\\
University of Alberta, Canada\\
\texttt{whitem@ualberta.ca}
}
\begin{document}
	\maketitle
	\begin{abstract} 
		Estimating the value function for a fixed policy is a fundamental problem in reinforcement learning. Policy evaluation algorithms---to estimate value functions---continue to be developed, to improve convergence rates, improve stability and handle variability, particularly for off-policy learning. To understand the properties of these algorithms, the experimenter needs high-confidence estimates of the accuracy of the learned value functions. For environments with small, finite state-spaces, like chains, the true value function can be easily computed, to compute accuracy. For large, or continuous state-spaces, however, this is no longer feasible. In this paper, we address the largely open problem of how to obtain these high-confidence estimates, for general state-spaces. We provide a high-confidence bound on an empirical estimate of the value error to the true value error. We use this bound to design an offline sampling algorithm, which stores the required quantities to repeatedly compute value error estimates for any learned value function. We provide experiments investigating the number of samples required by this offline algorithm in simple benchmark reinforcement learning domains, and highlight that there are still many open questions to be solved for this important problem. 
	\end{abstract}
	
	\section{INTRODUCTION}
	
	Policy evaluation is a key step in many reinforcement learning systems. Policy evaluation approximates the value of each state---future sum of rewards---given a policy and either a model of the world or a stream of data produced by an agents choices. In classical policy iteration schemes, the agent continually alternates between improving the policy using the current approximation of the value function, and updating the approximate value function for the new policy. Policy search methods like actor-critic estimate the value function of the current policy to perform gradient updates for the policy.
	
	However, there has been relatively little research into methods for accurately evaluating policy evaluation algorithms when the true values are not available.
	In most domains where we are interested in performing policy evaluation, it is difficult or impossible to compute the true value function. We may not have access to the transition probabilities or the reward function in every state, making it impossible to obtain the closed form solution of the true value function $\vtrue$. Even if we have access to a full model of the environment, we may not be able to represent the value function if the number of states is too large or the state is continuous. Aside from small finite MDPs like gridworlds and random MDPs, where closed-form solutions can be computed \citep{geist2014off,white2016investigating}, we often do not have access to $\vtrue$. In nearly all our well-known benchmark domains, such as Mountain Car, Puddle World, Cart pole, and Acrobot, we must turn to some other method to evaluate learning progress.
	
	One option that has been considered is to estimate the objective minimized by the algorithms.
	Several papers \citep{sutton2008dyna,du2017stochastic} have compared the performance of the algorithms in terms of their target objective on a batch of samples, using the approximate linear system for the mean-squared projected Bellman error (MSPBE).
	One estimator, called RUPEE \citep{white2015thesis}, is designed to incrementally approximate the MSPBE by keeping a running average across data produced during learning. Some terms, such as the feature covariance matrix, can be estimated easily; however, one component of the MSPBE includes the current weights, and is biased by this moving average approach. 
	More problematically, some algorithms do not converge to the minimum of the MSPBE, such as residual gradient for the mean-squared Bellman error \citep{baird1995residual} or Emphatic Temporal Difference (ETD) learning \citep{sutton2016anemphatic}, which minimize a variant of the MSPBE with a different weighting. 
	This approach, therefore, is limited to comparing algorithms that minimize the same objective.
	
	The more standard approach has been to use rollouts from states to obtain samples of returns. To obtain these rollout estimates, three parameters need to be chosen: the number of states $\nsampledstates$ from which to rollout, the number of rollouts or trajectories $\ntrajectories$, and the length of each rollout. Given these rollouts, the true values can be estimated from each of the $\nsampledstates$ chosen states, stored offline, and then used for comparison repeatedly during experiments. 
	These evaluation schemes,
	however, have intuitively chosen parameters, 
	without any guarantees that the distance to the true values, the error, is well-estimated. 
	Early work comparing gradient TD algorithms \citep{maei2009convergent} used sampled trajectories---2500 of them---but compared to returns, rather than value estimates. For several empirical studies using benchmark domains, like Mountain Car and Acrobot, there are a variety of choices, including $\ntrajectories = \nsampledstates = 500$ \citep{gehring2016incremental};  $\nsampledstates = 2000$, $\ntrajectories = 300$ and 1000 length rollouts \citep{pan2017accelerated};  and $\nsampledstates = 5000$, $\ntrajectories = 5000$ \citep{le2017learning}.  For a continuous physical system, \citep{dann2014policy} used as little as 10 rollouts from a state.
	Otherwise, other papers have mentioned that extensive rollouts are used\footnote{Note that \citep{boyan1995generalization} used rollouts for a complementary purpose, to train a nonlinear value function, rather than for evaluating policy evaluation algorithms.}, but did not describe how \citep{konidaris2011td,dabney2014natural}.
	In general, as new policy evaluation algorithms are derived, it is essential to find 
	a solution to this open problem: How can we confidently compare value estimates returned by our algorithms?
	
	In this work, we provide an algorithm that ensures, with high-probability, that the estimated distance has small error in approximating the true distance between the true value function $\vtrue$ for an arbitrary estimate $\hat{v}$. 
	We focus in the main body of the paper on the clipped mean-absolute percentage value error (CMAPVE) as a representative example of the general strategy. We provide additional results for a variety of other losses in the appendix, to facilitate use for a broader range of error choices. 
	We conclude by demonstrating the rollout parameters chosen for several case studies, highlighting that previous intuitive choices did not effectively direct sampling. We hope for this algorithm to become a standard approach for generating estimates of the true values to facilitate comparison of policy evaluation algorithms by reinforcement learning researchers.

	%
	%
	%
	%
	%
	%
	%
	%
	

	\section{MEASURES OF LEARNING PERFORMANCE}
	
	This paper investigates the problem of comparing algorithms that estimate the discounted sum of future rewards {\em incrementally} for a fixed policy. 
	In this section, we first introduce the policy evaluation problem and motivate a particular measures of learning performance for policy evaluation algorithms. In the following section, we discuss how to estimate this measure. 
	
	We model the agent's interaction with the world as a Markov decision process (MDP), defined by a (potentially uncountable) set of states $\Ss$, a finite set of actions $\A$, transitions $P: \Ss \times \A \times \Ss \rightarrow [0, \infty)$, rewards $R: \Ss \times \A \times \Ss \rightarrow \RR$ and a scalar discount function $\gamma: \Ss \rightarrow \R$. On each time step $t$, the agent selects an action according to it's {\em behaviour policy} $A_t \sim \mu(\cdot | S_t)$, the environment transitions into a new state $S_{t+1} \sim P(\cdot | S_t, A_t)$ and the agent receives a scalar reward $R_{t+1} \defeq R(S_t, A_t, S_{t+1})$. 
	%
	In policy evaluation, 
	the agent's objective is to estimate the expected return 
	\begin{align}\label{return}
	\vtrue(s) &= \E [G_t | S_t = s, A_t \sim \pi]\\ 
	\text{ for return } G_t &= \sum_{i=0}^\infty \gamma^i R_{t+i + 1} \hspace{0.5cm} \nonumber
	\end{align}
	where $\vtrue(s)$ is called the {\em state-value function} for the {\em target policy} $\pi: \Ss \times \A \rightarrow [0,1]$. 
	From a stream of data, the agent incrementally approximates this value function, $\vapprox$. For experiments, to report learning curves, we need to measure the accuracy of this estimate every step or at least periodically, such as every 10 steps.
	
	For policy evaluation, when the policy remains fixed, the value error remains the gold standard of evaluation. Ignoring how useful the value function is for policy improvement, our only goal is accuracy with respect to $\vtrue$. Assume some weighting $\weighting: \States \rightarrow [0, \infty)$, a probability distribution over states. Given access to $\vtrue$, it is common to estimate the mean squared value error 
	\begin{align}
	\text{MSVE}(\vapprox) &\defeq \E[(\vapprox(S)- \vtrue(S))^2]\\
	&= \int_\Ss \weighting(s)\left(\vapprox(S)- \vtrue(s) \right)^2 ds \nonumber
	\end{align}
	or the mean absolute value error 
	\begin{equation}
	\text{MAVE}(\vapprox)  \defeq \E[\left|\vapprox(s) - \vtrue(s)\right| ]
	.
	\end{equation}
	The integral is replaced with a sum if the set of states is finite. Because we consider how to estimate this error for
	continuous state domains---for which it is more difficult to directly estimate $\vtrue$---we preferentially assume the states are continuous. 
	
	These losses, however, have several issues, beyond estimating them.
	The key issue is that the scale of the returns can be quite different across states. This skews the loss and, as we will see, makes it more difficult to get high-accuracy estimates. 
	Consider a cost-to-goal problem, where the agent receives a reward of -1 per step. From one state the value could be $-1000$ and for another it could be $-1$. For a prediction of $-990$ and $-11$ respectively, the absolute value error for both states would be $-10$. However, the prediction of $-990$ for the first state is quite accurate, whereas a prediction of $-11$ for the second state is highly inaccurate. 
	
	One alternative is to estimate a percentage error, or relative error. The mean absolute percentage value error is
	\begin{equation}
	\text{MAPVE}(\vapprox)  \defeq \E\left[\frac{\left|\vapprox(S) - \vtrue(S)\right|}{ | \vtrue(S)| + \relerror} \right]
	\end{equation}
	for some $\relerror \ge 0$. The term $\relerror$ in the denominator ensures the MAPVE does not become excessively high, if true values of states are zero or near zero. For example, for $\relerror = 1$, the MAPVE is essentially the MAVE for small $\vtrue(s)$, which reflects that small absolute differences are meaningful for these smaller numbers. For large $\vtrue(s)$, the $\relerror$ has little effect, and the MAPVE becomes a true percentage error, reflecting the fact that we are particularly interested in relative errors for larger $\vtrue(s)$. 
	
	Additionally, the MAPVE can be quite large if $\vapprox$ is highly inaccurate. When estimating these performance measures, however, it is uninteresting to focus on obtaining high-accuracy estimate of very large MAPVE. Rather, it is sufficient to report that $\vapprox$ is highly inaccurate, and focus the estimation of the loss on more accurate $\vapprox$. Towards this goal, we introduce the clipped MAPVE
	\begin{align*}
	\text{CMAPVE}(\vapprox)  \defeq \E\left[\minc{\frac{\left|\vapprox(S) - \vtrue(S)\right|}{ | \vtrue(S)| + \relerror}}\right]
	\end{align*}
	for some $\cliperror > 0$. This $\cliperror$ provides a maximum percentage error. For example, setting $\cliperror = 2$ caps error estimates for approximate values that are worse than $200\%$ inaccurate. Such a level of inaccuracy is already high, and when comparing policy evaluation algorithms, we are much more interested in their percentage error---particularly compared to each other---once we are within a reasonable range around the true values. Note that $\cliperror$ can be chosen to be the maximum value of the loss, and so the following results remain quite general. 
	
	Though many losses could be considered, 
	we put forward the CMAPVE as a proposed standard for policy evaluation. 
	The parameters $\relerror$ and $\cliperror$ can both be appropriately chosen by the experimentalist, for a given MDP. These parameters give sufficient
	flexibility in highlighting differences between algorithms, while still enabling high-confidence estimates of these errors, which we discuss next. For this reason, we use CMAPVE as the loss in the main body of the text. However, for completeness, we also show how to modify the analysis and algorithms for other losses in the appendix.

	\section{HIGH-CONFIDENCE ESTIMATES OF VALUE ERROR}

	Our goal now is to approximate the value error, CMAPVE, with high-confidence, for any value function $\vapprox$.
	Instead of approximating the error directly for each $\vapprox$, the typical approach is to estimate $\vtrue$ as accurately as possible, for a large set of states $s_1, \ldots, s_\nsampledstates \sim \weighting$. Given these high-accuracy estimates $\vsampled$, the true expected error can be approximated from this subset of states for any $\vapprox$.
	\begin{equation*}
	\text{CMAPVE}(\vapprox)  \approx \frac{1}{\nsampledstates}\sum_{i=1}^\nsampledstates \minc{\frac{\left|\vapprox(s_i) - \vsampled(s_i)\right| }{|\vsampled(s_i)| + \relerror}}
	\end{equation*}
	Since the CMAPVE needs to be computed frequently, for many steps during learning potentially across many algorithms, it is important
	for this estimate of CMAPVE to be efficiently computable. An important requirement, then, is for the number of states $\nsampledstates$ to be as small as possible,
	so that all the $\vsampled(s_i)$ can be stored and the summed difference is quick to compute. 
	
	One possible approach is to estimate the true value function $\vtrue$ using a powerful function approximator, offline. 
	A large batch of data could be gathered, and a learning method used to train $\vsampled$. This large function approximator would not even need to be stored: only $\vsampled(s_i)$ would need to be saved once this offline procedure was complete. This approach, however, will be biased by the form of the function approximator, which can favor certain policy evaluation algorithms during evaluation. Further, it is difficult to quantify this bias, particularly in a general way agnostic to the type of function approximator an experimentalist might use for their learning setting. 
	
	An alternative strategy is to use many sampled rollouts from this subset of states. This strategy is general---requiring only access to samples from the MDP. A much larger number of interactions can be used with the MDP, to compute $\vsampled$, because this is computed once, offline, to facilitate many further empirical comparisons between algorithms after. For example, one may want to examine the early learning performance of two different policy evaluation algorithms---which may themselves receive only a small number of samples. The cached $\vsampled$ then enables computing this early learning performance. However, even offline, there are limits to how many samples can be computed feasibly, particularly for computationally costly simulators \citep{dann2014policy}.
	
	Therefore, our goal is the following: how can we \emph{efficiently} compute \emph{high-confidence} estimates of CMAPVE, using a \emph{minimal number of offline rollouts}. The choice of a clipped loss actually enables the number of states $\nsampledstates$ to remain small (shown in Lemma \ref{lem_one}), enabling efficient computation of CMAPVE. In the next section, we address the second point: how to obtain high-confidence estimates, given access to $\vsampled$ that approximates $\vtrue$. In the following section, we discuss how to obtain these $\vsampled$.

	\subsection{OVERVIEW}
	
	We first provide an overview of the approach, to make it easier to follow the argument. 
	We additionally include a notation table (Table \ref{table_notation}), particularly to help discern the various value functions.
	
	\setlength\tabcolsep{1.5pt} 
	\begin{table}[h]\caption{Table of Notation}
		
		\begin{tabular}{l c p{7cm} }
			\toprule
			$\vtrue$ & &  true values for policy $\pi$\\
			$\vsampledtrue$ & &  true values for policy $\pi$,\\
			&  & when using truncated rollouts to length $\lenrollout$\\
			$\vsampled$ & &  estimated values for policy $\pi$ using $\ntrajectories$ rollouts,\\
			& & when using truncated rollouts to length $\lenrollout$\\
			$\vapprox$ & &  estimated values for policy $\pi$,  being evaluated\\
			$d$ & &  distribution over the states $\States$, $d: \States \rightarrow [0, \infty)$\\
			$\nsampledstates$ & &  number of states $\{s_1, \ldots, s_\nsampledstates\}$, $s_i \sim d$\\
			$\clipfs$ & &  clipped error, $\clipfs(v_1, v_2) = \minc{\frac{\left|v_1 - v_2\right| }{|v_2| + \relerror}}$\\
			$\clipf$ & &  true error, $\clipf(\vapprox, \vtrue) = \E[ \clipfs(\vapprox(S), \vtrue(S)) ]$ under $d$\\
			$\clipfhat$ & &  approximate error, \\
			& & $\clipfhat(\vapprox, \vtrue) =\frac{1}{\nsampledstates} \sum_{i=1}^\nsampledstates \clipfs(\vapprox(s_i), \vtrue(s_i))$\\
			$\rmax$ & & an upper bound on the maximum absolute value reward, $\rmax \ge \sup | R(s, a, s') |$\\
			$\vmax$ & &  maximum absolute value for the policy $\pi$ for any state, e.g., $\vmax = \frac{\text{max reward} - \text{min reward}}{1-\gamma}$\\
			$K$ & &  the number of times the error estimate is queried\\
			\bottomrule
		\end{tabular}
		
		\label{table_notation}
	\end{table}
	
	First, we consider several value function approximations, for use within the bound, summarized in Table \ref{table_notation}. 
	The goal is to determine the accuracy of the estimates of the learned $\vapprox$ with respect to the true values $\vtrue$. We estimate true values $\vtrue(s_i)$ for $s_i$ using repeated rollouts from $s_i$; this results in two forms of error. The first is due to truncated rollouts, which for the continuing case would otherwise be infinitely long. The second source of error is due to using an empirical estimate of the true values, by averaging sampled returns. We denote $\vsampledtrue$ as the true values, for truncated returns, and $\vsampled$ as the sample estimate of $\vsampledtrue$ from $\nsampledstates$ truncated rollouts. 
	
	Second, we consider the approximation in computing the loss: the difference between $\vapprox$ and $\vtrue$. We consider the true loss $\losstrue(\vapprox, \vtrue)$ and the approximate loss $\lossapprox(\vapprox, \vtrue)$, in Table \ref{table_notation}. The argument in Theorem \ref{thm_main} revolves around upper bounding the difference between these two losses, in terms of three terms. These terms are bounded in Lemmas 2, 3 and 4. Lemma 2 bounds the error due to sampling only a subset of $\nsampledstates$. Lemma 3 bounds the error from approximating $\vtrue$ with truncated rollouts. Lemma 4 bounds the error from dividing by $|\vsampled(s_i)|+\tau$ instead of $|\vtrue(s_i)|+\tau$. 

	Finally, to obtain this general bound, we first assume that we can obtain highly-accurate estimates $\vsampled$ of $\vsampledtrue$. We state these two assumptions in Assumptions \ref{ass_one} and \ref{ass_two}. These estimates could be obtained with a variety of sampling strategies, and so separate it from the main proof. We later develop one algorithm to obtain such estimates, in Section \ref{sec_rollouts}.

	\subsection{MAIN RESULT}
	
	We will compute rollout values from a set of sampled states $\{s_i \}_{i=1}^\nsampledstates$. Each rollout consists of a trajectory simulated, or rolled out, some number of steps. The length of this trajectory can itself be random, depending on if an episode terminates or if the trajectory is estimated to be a sufficiently accurate sample of the full, non-truncated return. We first assume that we have access to such trajectories and rollout estimates and in later sections show how to obtain such trajectories and estimates. 
	
	\begin{assumption}\label{ass_one}
		For any $\epsilon > 0$ and sampled state $s_i$, 
		the trajectory lengths $\lenrollout_i$ are specified such that, 
		\begin{equation}
		| \vsampledtrue(s_i) -  \vtrue(s_i) | \le \epsilon \left(| \vtrue(s_i) | + \relerror \right) \label{eq_assone}
		\end{equation}
	\end{assumption}
	
	\begin{assumption}\label{ass_two}
		Starting from $s_i$, assume you have trajectories of rewards $\{r_{ijk}\}$ for trajectory index $j \in \{1, \ldots, \ntrajectories\}$ and rollout index $k \in \{0, \ldots, \lenrollout_{ij}-1\}$ for a trajectory length $\lenrollout_{ij}$ that depends on the trajectory.
		The approximated rollout values  
		\begin{align}
		\vsampled(s_i) &\defeq \frac{1}{\ntrajectories} \sum_{j = 1}^\ntrajectories \sum_{k=0}^{\lenrollout_{ij}-1} \gamma^k r_{ijk}
		\end{align}
		are an ($\errebstop,\delta,\relerror$) -approximation to the true expected values, where $\lenrollout_{ij}$ is an instance of the random variable $\lenrollout_i$
		\begin{align}
		\vsampledtrue(s) &\defeq \E\left[\sum_{k=0}^{\lenrollout_i-1} \gamma^k R_{k} \right]
		\end{align}
		i.e,  for $0 < \epsilon$, with probability at least $1-\delta/2$, the following holds for all states
		\begin{equation}
		| \vsampledtrue(s_i) -  \vsampled(s_i) | \le \epsilon \left(| \vsampledtrue(s_i) | + \relerror \right)  \label{eq_asstwo}
		\end{equation}
	\end{assumption}

	\begin{theorem}
		\label{thm_main}
		Let $\{ s_1, \ldots, s_\nsampledstates \}$ be states sampled according to $\weighting$. 
		Assume $\vsampled(s_i)$ satisfies Assumption \ref{ass_one} and \ref{ass_two} for all $i \in \{1, \ldots, \nsampledstates\}$. Suppose the empirical loss mean estimates are computed $\nexperiments$ number of times with different learned value functions $\vapprox$ each time.
		Then the approximation error 
		\begin{equation}
		\clipfhat(\vapprox,\vsampled)  \defeq \frac{1}{\nsampledstates} \sum_{i = 1}^\nsampledstates \clipfs(\vapprox(s_i),  \vsampled(s_i)) 
		\end{equation}
		for all the $\vapprox$ satisfies, with probability at least $1-\delta$,
		\begin{equation}
		\left| \clipf(\vapprox,\vtrue)  -  \clipfhat(\vapprox,\vsampled) \right| \le \eqref{eq_one} + \eqref{eq_two} + \eqref{eq_four}
		\end{equation}
	\end{theorem}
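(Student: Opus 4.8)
The plan is to introduce the empirical average computed against the \emph{true} values, $\clipfhat(\vapprox,\vtrue) = \frac{1}{\nsampledstates}\sum_{i=1}^\nsampledstates \clipfs(\vapprox(s_i),\vtrue(s_i))$, as a bridging quantity and split the target by the triangle inequality,
\begin{equation*}
\left| \clipf(\vapprox,\vtrue) - \clipfhat(\vapprox,\vsampled) \right| \le \left| \clipf(\vapprox,\vtrue) - \clipfhat(\vapprox,\vtrue) \right| + \left| \clipfhat(\vapprox,\vtrue) - \clipfhat(\vapprox,\vsampled) \right| .
\end{equation*}
The first summand measures the error from averaging over only $\nsampledstates$ sampled states (with $\vtrue$ held fixed) and is exactly what Lemma 2 controls; the second summand measures the error from replacing $\vtrue$ by its rollout estimate $\vsampled$ inside the loss and is handled by Lemmas 3 and 4. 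I would budget the failure probability as $\delta/2$ to each summand so that a final union bound yields the claimed $1-\delta$.

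For the first summand I would use concentration. Since $s_1,\dots,s_\nsampledstates \sim \weighting$ are i.i.d.\ and $\clipfs$ takes values in $[0,\cliperror]$, the empirical mean $\clipfhat(\vapprox,\vtrue)$ is an unbiased estimate of $\clipf(\vapprox,\vtrue)$ with bounded summands, so Hoeffding's inequality gives a deviation of order $\cliperror\sqrt{\log(\cdot)/\nsampledstates}$. Because the bound must hold simultaneously for the $\nexperiments$ value functions $\vapprox$ that are queried, I would take a union bound over these $\nexperiments$ queries, which introduces the $\log \nexperiments$ dependence and, after setting the tail to $\delta/2$, produces the first of the three terms on the right-hand side.

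For the second summand I would bound it deterministically, conditional on the sampled states and rollouts. Pushing the difference inside the average and using that $\minc{\cdot}$ is $1$-Lipschitz, it suffices to bound, state by state, $\bigl|\tfrac{|\vapprox(s_i)-\vtrue(s_i)|}{|\vtrue(s_i)|+\relerror} - \tfrac{|\vapprox(s_i)-\vsampled(s_i)|}{|\vsampled(s_i)|+\relerror}\bigr|$. I would split this into a numerator change and a denominator change: adding and subtracting $\tfrac{|\vapprox(s_i)-\vsampled(s_i)|}{|\vtrue(s_i)|+\relerror}$, the numerator part collapses via the reverse triangle inequality to a multiple of $\tfrac{|\vtrue(s_i)-\vsampled(s_i)|}{|\vtrue(s_i)|+\relerror}$ (the truncation/estimation error, Lemma 3), while the denominator part is controlled by Lemma 4. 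To finish, I would bound $|\vtrue(s_i)-\vsampled(s_i)|$ by routing through $\vsampledtrue(s_i)$: Assumption \ref{ass_one} gives $|\vtrue - \vsampledtrue| \le \errebstop(|\vtrue|+\relerror)$ deterministically, and Assumption \ref{ass_two} gives $|\vsampledtrue - \vsampled| \le \errebstop(|\vsampledtrue|+\relerror)$ on the $\delta/2$ event, and I would then re-express $|\vsampledtrue|+\relerror$ in terms of $|\vtrue|+\relerror$ so that the relative errors compose cleanly.

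The main obstacle I anticipate is the bookkeeping in the denominator step: because the loss actually computed uses $|\vsampled(s_i)|+\relerror$ rather than $|\vtrue(s_i)|+\relerror$, the two relative-error guarantees of Assumptions \ref{ass_one}--\ref{ass_two} must be chained carefully to keep the denominator from inflating the bound, and this is precisely the content isolated in Lemma 4. A secondary point to get right is the simultaneity over the $\nexperiments$ queried $\vapprox$: the union bound is clean only if the queried value functions do not depend on the stored samples, so I would state the result for a fixed (or sample-independent) collection of $\nexperiments$ functions, which is the regime in which the $\log\nexperiments$ factor is justified.
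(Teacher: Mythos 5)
Your proposal matches the paper's proof essentially step for step: the same bridging decomposition through $\clipfhat(\vapprox,\vtrue)$, Hoeffding plus a union bound over the $\nexperiments$ queries for the first term (Lemma \ref{lem_one}), the same add-and-subtract of $\min\left(\cliperror, \frac{|\vapprox(s_i)-\vsampled(s_i)|}{|\vtrue(s_i)|+\relerror}\right)$ to separate the numerator change (Lemma \ref{lem_two}) from the denominator change (Lemma \ref{lem_three}), and the same $\delta/2 + \delta/2$ budget for the Hoeffding event and Assumption \ref{ass_two}. Your appeal to the $1$-Lipschitzness of $x \mapsto \minc{x}$ together with the reverse triangle inequality is just a compact restatement of the paper's clipped triangle inequality (Lemma \ref{lem_triangle}), so the argument is correct and not meaningfully different.
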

	\begin{proof}
		We need to bound the errors introduced from having a reduced number of states, a finite set of trajectories
		to approximate the expected returns for each of those states and truncated rollouts to get estimates of returns. 
		To do so, we first consider the difference under the approximate clipped loss, to the true value function.
		\begin{align*}
		 \!\left| \clipf(\vapprox,\vtrue\!)   \!-\! \clipfhat(\vapprox, \vsampled)  \right|
		&\!\le\! 
		\left| \clipf(\vapprox,\vtrue\!)   \!-\! \clipfhat(\vapprox, \vtrue\!)  \right| \!\!+\!\! \left| \clipfhat(\vapprox,\vtrue\!)  \! -\! \clipfhat(\vapprox, \vsampled)  \right|
		\end{align*}
		The first component is bounded in Lemma \ref{lem_one}.
		For the second component, notice that
		\begin{align*}
		\!\left| \clipfhat(\vapprox,\vtrue\!) \! -\! \clipfhat(\vapprox, \vsampled)  \right|
		&\!\le\!  \tfrac{1}{\nsampledstates}\!\! \sum_{i = 1}^\nsampledstates \!\left|  \clipfs( \vapprox(\!s_i\!)\!, \vtrue\!(\!s_i\!))  \!-\! \clipfs( \vapprox(\!s_i\!), \vsampled(\!s_i\!) )  \!\right|
		\end{align*}
		However, these two differences are difficult to compare, because they have different denomiators: the first has
		$|\vtrue(s_i)| + \relerror$, whereas the second has $|\vsampled(s_i)| + \relerror$.
		We therefore further separate each component in the sum
		\begin{align*}
		&\Big| \clipfs( \vapprox(s_i), \vtrue(s_i) )  - \clipfs( \vapprox(s_i), \vsampled(s_i) ) \Big|\\
		&\le 
		\Big| \clipfs( \vapprox(s_i), \vtrue(s_i) )  - \min\left(\cliperror, \frac{|\vapprox(s_i) - \vsampled(s_i)|}{|\vtrue(s_i)| + \relerror} \right) \Big|\\
		&+
		\Big|\min\left(\cliperror, \frac{|\vapprox(s_i) - \vsampled(s_i)|}{|\vtrue(s_i)| + \relerror} \right) - \clipfs( \vapprox(s_i), \vsampled(s_i) ) \Big|
		\end{align*}
		The first difference has the same denominator, so
		\begin{align*}
		&\Big| \clipfs( \vapprox(s_i), \vtrue(s_i) )  - \min\left(\cliperror, \frac{|\vapprox(s_i) - \vsampled(s_i)|}{|\vtrue(s_i)| + \relerror} \right) \!\Big|\\
		&=\!
		\Big| \! \min\left(\!\cliperror, \frac{|\vapprox(s_i) \!-\! \vtrue(s_i)|}{|\vtrue(s_i)| + \relerror} \!\right) \! -\! \min\left(\!\cliperror, \frac{|\vapprox(s_i) \!-\! \vsampled(s_i)|}{|\vtrue(s_i)| + \relerror} \!\right) \! \Big|\\
		&=
		\frac{1}{|\vtrue(s_i)| + \relerror} \Big| \min\left(\cliperror (|\vtrue(s_i)| + \relerror), |\vapprox(s_i) - \vtrue(s_i)| \right)  \\
		&- \min\left(\cliperror (|\vtrue(s_i)| + \relerror), |\vapprox(s_i) - \vsampled(s_i)| \right) \Big|\\
		&\le
		\frac{1}{|\vtrue(s_i)| + \relerror}\min\left(\cliperror (|\vtrue(s_i)| + \relerror), |\vtrue(s_i) - \vsampled(s_i)| \right)\\
		&=
		\clipfs( \vsampled(s_i), \vtrue(s_i) ) 
		\end{align*}
		The last step follows from the triangle inequality $| \ |x| - |y| \ |_\cliperror \le | x - y|_\cliperror$
		on the clipped loss (see Lemma \ref{lem_triangle}, in Appendix \ref{app_proofs}, for an explicit proof that the triangle inequality holds for the clipped loss).  
		
		Therefore, putting it all together, we have
		\begin{align*}
		& \left| \clipf(\vapprox,\vtrue)   - \clipfhat(\vapprox, \vsampled)  \right|\\
		&\le  \left| \clipf(\vapprox,\vtrue)   - \clipfhat(\vapprox, \vtrue)  \right| \\
		&+  \frac{1}{\nsampledstates} \sum_{i = 1}^\nsampledstates \clipfs( \vsampled(s_i), \vtrue(s_i) ) \\
		&+ \frac{1}{\nsampledstates} \sum_{i = 1}^\nsampledstates \Big|\min\left(\cliperror, \frac{|\vapprox(s_i) - \vsampled(s_i)|}{|\vtrue(s_i)| + \relerror} \right) - \clipfs( \vapprox(s_i), \vsampled(s_i) ) \Big|
		\end{align*}
		where the first, second and third components are bounded in Lemmas \ref{lem_one}, \ref{lem_two} and \ref{lem_three} respectively. Finally, due to the application of Hoeffding's bound (Lemma \ref{lem_one}) with error probability of atmost $\delta/2$ and assumption 2 which may not hold with probability atmost $\delta/2$ and the union bound, we conclude that the final bound holds with probability at least $1- \delta$.
	\end{proof}
	
	%
	
	\begin{lemma}[Dependence on $\nsampledstates$]\label{lem_one}
		
		Suppose the empirical loss mean estimates are computed $\nexperiments$ number of times. Then with probability at least $1 - \delta/2$:
		\begin{equation}
		\left| \clipf(\vapprox,\vtrue)  - \clipfhat( \vapprox, \vtrue ) \right| \le \sqrt{\frac{\log(4\nexperiments/\delta) \cliperror^2}{2\nsampledstates}} \label{eq_one}
		\end{equation} 
	\end{lemma}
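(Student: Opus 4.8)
The plan is to recognize $\clipfhat(\vapprox,\vtrue)$ as an empirical mean of bounded i.i.d.\ random variables, apply Hoeffding's inequality, and then union bound over the $\nexperiments$ queries. First I would observe that for any state $s$ the per-state clipped loss $\clipfs(\vapprox(s),\vtrue(s)) = \min(\cliperror, |\vapprox(s)-\vtrue(s)|/(|\vtrue(s)|+\relerror))$ lies in $[0,\cliperror]$: the ratio is nonnegative, being a quotient of nonnegative quantities with $\relerror \ge 0$, and the outer minimum caps it at $\cliperror$. Hence each summand $X_i \defeq \clipfs(\vapprox(s_i),\vtrue(s_i))$ is supported on an interval of width $\cliperror$.

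Since the states $s_1,\ldots,s_\nsampledstates$ are drawn i.i.d.\ from $\weighting$ and the evaluated $\vapprox$ is a fixed function (not adapted to these samples), the $X_i$ are i.i.d.\ with common mean $\E[X_i] = \clipf(\vapprox,\vtrue)$, and $\clipfhat(\vapprox,\vtrue) = \frac{1}{\nsampledstates}\sum_{i=1}^\nsampledstates X_i$ is exactly their sample average. The two-sided Hoeffding bound then gives, for any $\epsilon > 0$,
$$
\Pr\!\left( \left|\clipf(\vapprox,\vtrue) - \clipfhat(\vapprox,\vtrue)\right| \ge \epsilon \right) \le 2\exp\!\left(-\frac{2\nsampledstates \epsilon^2}{\cliperror^2}\right).
$$

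To make the guarantee hold simultaneously across all $\nexperiments$ queries (each with a possibly different fixed $\vapprox$), I would apply a union bound, inflating the failure probability to $2\nexperiments\exp(-2\nsampledstates\epsilon^2/\cliperror^2)$. Setting this equal to the budget $\delta/2$ allotted to this lemma and inverting for $\epsilon$ yields $\epsilon = \sqrt{\cliperror^2 \log(4\nexperiments/\delta)/(2\nsampledstates)}$, which is exactly the claimed bound; the factor $4\nexperiments$ inside the logarithm arises from the $2$ of the two-sided Hoeffding bound times the $2$ of the $\delta/2$ split. The only real subtlety, and the step I would be most careful about, is the bookkeeping of these constants together with the justification of the union bound --- namely that the $\nexperiments$ value functions are fixed independently of the sampled evaluation states, so that a simple union over queries (rather than a uniform-convergence or covering argument over an entire function class) suffices.
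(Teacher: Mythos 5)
Your proposal is correct and matches the paper's own proof of Lemma \ref{lem_one}: both recognize $\clipfhat(\vapprox,\vtrue)$ as a sample mean of i.i.d.\ random variables bounded in $[0,\cliperror]$, apply the two-sided Hoeffding inequality, union bound over the $\nexperiments$ queries at level $\delta/(2\nexperiments)$ each, and invert to obtain $t = \sqrt{\log(4\nexperiments/\delta)\cliperror^2/(2\nsampledstates)}$. Your explicit remark that the queried value functions must be fixed independently of the evaluation states (so a simple union bound suffices, without a uniform-convergence argument) is a correct and welcome clarification of a point the paper leaves implicit.
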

	\begin{proof}
		Since $\clipfhat( \vapprox, \vtrue ) =  \frac{1}{\nsampledstates} \sum_{i = 1}^\nsampledstates \clipfs( \vapprox(s_i), \vtrue(s_i) ) $ is an unbiased estimate of $\clipf(\vapprox,\vtrue)$, we can use Hoeffding's bound for variables bounded between $[0, \cliperror]$. For any of the $\nexperiments$ times, the concentration probability is as follows:
		\begin{align*}
		\text{Pr} \bigg( \! \left| \clipf(\vapprox,\vtrue) \!-\!  \clipfhat( \vapprox, \vtrue ) \right| \! \geq \! t  \! \bigg) 
		&\leq 2 \text{ exp} \!\left( \cfrac{-2t^2 \nsampledstates^2 }{\sum_{i=1}^{m} \cliperror^2} \right) \\
		&= 2 \text{ exp} \!\left( \cfrac{-2\nsampledstates t^2}{\cliperror^2} \right) = \frac{\delta}{2 \nexperiments} 
		\end{align*}
		Thus, due to union bound over all the $\nexperiments$ times, for all those empirical loss mean estimates, the following holds  
		\begin{align*}
		Pr \bigg( \left| \clipf(\vapprox,\vtrue) - \clipfhat( \vapprox, \vtrue ) \right|  \leq t \bigg) 
		\geq 1 - \delta/2.
		\end{align*}
		Rearranging the above, to express $t$ in terms of $\delta$, 
		\begin{align*}
		2 \text{ exp} \left( \cfrac{-2mt^2}{c^2} \right) = \frac{\delta}{2 \nexperiments} 
		\ \ \ \implies  \ \ \ 
		t &= \sqrt{\frac{\log(4\nexperiments/\delta) \cliperror^2}{2\nsampledstates}} .
		\end{align*}
		Therefore, with probability at least $1-\delta$, \\
		$\left| \clipf(\vapprox,\vtrue)  - \clipfhat( \vapprox, \vtrue ) \right| \le \sqrt{\cfrac{\log(4\nexperiments/\delta) \cliperror^2}{2\nsampledstates}}$.
		
	\end{proof} 
	
	\begin{lemma}[Dependence on Truncated Rollout Errors]\label{lem_two}
		Under Assumption \ref{ass_two} and \ref{ass_one}, 
		\begin{equation}
		\frac{1}{\nsampledstates} \sum_{i = 1}^\nsampledstates \clipfs( \vsampled(s_i), \vtrue(s_i) ) 
		\le 2\epsilon
		\label{eq_two}
		\end{equation}
	\end{lemma}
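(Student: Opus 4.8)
The plan is to bound each summand $\clipfs(\vsampled(s_i),\vtrue(s_i))$ by $2\epsilon$ individually, since the average over $i$ then inherits the same bound. Fix a state $s_i$. The first move is to discard the clip: because $\clipfs(\vsampled(s_i),\vtrue(s_i)) = \minc{\frac{|\vsampled(s_i)-\vtrue(s_i)|}{|\vtrue(s_i)|+\relerror}}$ and $\min(\cliperror,x)\le x$, it suffices to bound the unclipped ratio $\frac{|\vsampled(s_i)-\vtrue(s_i)|}{|\vtrue(s_i)|+\relerror}$.

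Next I would insert the intermediate quantity $\vsampledtrue(s_i)$ (the true truncated-rollout value) via the triangle inequality $|\vsampled(s_i)-\vtrue(s_i)|\le |\vsampled(s_i)-\vsampledtrue(s_i)|+|\vsampledtrue(s_i)-\vtrue(s_i)|$ and divide through by $|\vtrue(s_i)|+\relerror$. The second piece is handled immediately by Assumption~\ref{ass_one}, which after dividing gives $\frac{|\vsampledtrue(s_i)-\vtrue(s_i)|}{|\vtrue(s_i)|+\relerror}\le \epsilon$.

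The first piece is where the only real difficulty lies, and I expect it to be the crux. Assumption~\ref{ass_two} controls $|\vsampled(s_i)-\vsampledtrue(s_i)|$ but relative to the denominator $|\vsampledtrue(s_i)|+\relerror$, not the $|\vtrue(s_i)|+\relerror$ we actually have. The key step is therefore to relate the two denominators: from Assumption~\ref{ass_one}, $|\vsampledtrue(s_i)|\le |\vtrue(s_i)|+\epsilon(|\vtrue(s_i)|+\relerror)$, hence $|\vsampledtrue(s_i)|+\relerror\le (1+\epsilon)(|\vtrue(s_i)|+\relerror)$. Substituting Assumption~\ref{ass_two} then yields $\frac{|\vsampled(s_i)-\vsampledtrue(s_i)|}{|\vtrue(s_i)|+\relerror}\le \epsilon\,\frac{|\vsampledtrue(s_i)|+\relerror}{|\vtrue(s_i)|+\relerror}\le \epsilon(1+\epsilon)$. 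Adding the two pieces gives $\epsilon(1+\epsilon)+\epsilon = 2\epsilon+\epsilon^2$, so the stated bound $2\epsilon$ follows to leading order (equivalently, after the routine simplification of dropping the second-order $\epsilon^2$ term).

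One tempting shortcut I would also consider is to prove a clipped triangle inequality $\clipfs(\vsampled(s_i),\vtrue(s_i)) \le \clipfs(\vsampled(s_i),\vsampledtrue(s_i)) + \clipfs(\vsampledtrue(s_i),\vtrue(s_i))$ and apply the two assumptions directly to the clipped terms, which would give exactly $2\epsilon$. However, this fails in general precisely because $|\vsampledtrue(s_i)|$ can exceed $|\vtrue(s_i)|$, so the middle term's denominator need not dominate; this is another way of seeing why the $(1+\epsilon)$ factor, and hence the residual $\epsilon^2$, is essentially unavoidable along this route. I would therefore present the denominator-conversion argument as the main line and treat the $2\epsilon$ as the first-order bound.
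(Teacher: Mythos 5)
Your argument is the paper's argument: drop the clip via $\minc{x} \le x$, insert $\vsampledtrue(s_i)$ with the triangle inequality, and dispatch the two pieces with Assumptions \ref{ass_one} and \ref{ass_two}. The difference is that you are \emph{more} careful than the paper on the second piece. The paper's proof ends by asserting that both terms are ``bounded by $\epsilon$, by assumption,'' which silently treats Assumption \ref{ass_two} as if it were normalized by $|\vtrue(s_i)|+\relerror$; in fact it is normalized by $|\vsampledtrue(s_i)|+\relerror$, and the denominator conversion you perform, $|\vsampledtrue(s_i)|+\relerror \le (1+\epsilon)\left(|\vtrue(s_i)|+\relerror\right)$ via Assumption \ref{ass_one}, is genuinely needed --- indeed it is exactly the computation the paper itself carries out inside the proof of Lemma \ref{lem_three}, just not here. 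The honest constant along this route is therefore $\epsilon(2+\epsilon) = 2\epsilon+\epsilon^2$, as you derive, and your observation that a clipped triangle inequality shortcut cannot rescue the exact $2\epsilon$ (because the middle term's denominator $|\vsampledtrue(s_i)|+\relerror$ need not dominate $|\vtrue(s_i)|+\relerror$) is correct; the paper's Lemma \ref{lem_triangle} is only used, in Theorem \ref{thm_main}, for quantities sharing a common denominator, consistent with your point.

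So, strictly, you have proved a marginally weaker inequality than the lemma states --- but so, implicitly, has the paper: the stated $2\epsilon$ is exact only under an extra condition such as $|\vsampledtrue(s_i)| \le |\vtrue(s_i)|$ (plausible for truncated returns with same-signed rewards, but not guaranteed in general), and otherwise should read $2\epsilon+\epsilon^2$. This is a cosmetic rather than structural defect: downstream, the bound in Theorem \ref{thm_main} simply sums this term with the others, so replacing $2\epsilon$ by $\epsilon(2+\epsilon)$ (or absorbing the correction by bounding $2\epsilon + \epsilon^2 \le 3\epsilon$ for $\epsilon \le 1$) propagates harmlessly. Your write-up, presenting the denominator conversion as the main line and flagging the $\epsilon^2$ residual explicitly, is the more defensible version of the paper's own proof.
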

	\begin{proof}
		We can split up this error into sampling error for a finite length rollout
		and for a finite number of trajectories. We can consider the unclipped error,
		which is an upper bound on the clipped error.
		\begin{align*}
		\frac{| \vtrue(s_i) -  \vsampled(s_i)|}{| \vtrue(s_i)| + \relerror} &\le \frac{|\vtrue(s_i) -  \vsampledtrue(s_i) |}{| \vtrue(s_i)| + \relerror} + \frac{| \vsampledtrue(s_i) - \vsampled(s_i) |}{| \vtrue(s_i)| + \relerror} 
		\end{align*}
		These two terms are both bounded by $\epsilon$, by assumption. 
	\end{proof}
	\begin{lemma}\label{lem_three}
		Under Assumption \ref{ass_two},
		\begin{align}
		&\Big|\min\left(\cliperror, \frac{|\vapprox(s_i) - \vsampled(s_i)|}{|\vtrue(s_i)| + \relerror} \right) - \min\left(\cliperror, \frac{|\vapprox(s_i) - \vsampled(s_i)|}{|\vsampled(s_i)| + \relerror} \right)\Big| \nonumber\\
		&\le c(1-(1+\epsilon)^{-2})
		\label{eq_four}
		\end{align}
	\end{lemma}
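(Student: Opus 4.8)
The plan is to exploit the fact that the two clipped terms share the \emph{same} numerator $a \defeq |\vapprox(s_i) - \vsampled(s_i)|$ and differ only in their denominators, $D_1 \defeq |\vtrue(s_i)| + \relerror$ versus $D_2 \defeq |\vsampled(s_i)| + \relerror$. The whole quantity to be bounded is therefore the variation of the single map $D \mapsto \min(\cliperror, a/D)$ as $D$ moves between $D_1$ and $D_2$, and the argument splits into two independent pieces: an elementary inequality describing how a clipped reciprocal responds to a change of denominator, and a multiplicative bound on the ratio $D_1/D_2$ extracted from the assumptions.

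First I would establish the elementary fact that for any $a \ge 0$ and any $0 < D \le D'$,
\begin{equation*}
0 \le \min(\cliperror, a/D) - \min(\cliperror, a/D') \le \cliperror\left(1 - \tfrac{D}{D'}\right).
\end{equation*}
The left inequality is just monotonicity of $D \mapsto \min(\cliperror, a/D)$. For the right inequality I would split into three cases according to which terms are clipped: if both are unclipped the difference is $a/D - a/D' = (a/D)(1 - D/D') \le \cliperror(1-D/D')$; if only the smaller denominator's term is clipped the difference is $\cliperror - a/D' \le \cliperror(1 - D/D')$ using $a > \cliperror D$; and if both are clipped the difference is $0$. This is the structural heart of the proof, since it is precisely the clipping that lets us replace the otherwise uncontrolled numerator $a$ by $\cliperror D$.

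Second I would control the ratio of the two denominators by relating $|\vtrue(s_i)|+\relerror$ to $|\vsampled(s_i)|+\relerror$ through the intermediate quantity $\bar D \defeq |\vsampledtrue(s_i)| + \relerror$. Applying the forward and reverse triangle inequalities to Assumption \ref{ass_one} gives $(1-\epsilon)(|\vtrue(s_i)|+\relerror) \le \bar D \le (1+\epsilon)(|\vtrue(s_i)|+\relerror)$, and likewise Assumption \ref{ass_two} gives $(1-\epsilon)\bar D \le |\vsampled(s_i)|+\relerror \le (1+\epsilon)\bar D$; the additive $\relerror$ is exactly what makes these bounds cleanly multiplicative. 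Chaining the two sandwiches confines $D_2/D_1$ to $[(1-\epsilon)^2,(1+\epsilon)^2]$, so that $\min(D_1,D_2)/\max(D_1,D_2) \ge (1+\epsilon)^{-2}$. Substituting this ratio into the elementary inequality of the previous step, with $D=\min(D_1,D_2)$ and $D'=\max(D_1,D_2)$, yields the claimed bound $\cliperror(1-(1+\epsilon)^{-2})$.

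I expect the main obstacle to be the first step rather than the bookkeeping in the second: a naive Lipschitz estimate $|\min(\cliperror,a/D_1) - \min(\cliperror,a/D_2)| \le a\,|1/D_1 - 1/D_2|$ is useless because $a$ is unbounded, so the argument must genuinely use the clipping to cap each term at $\cliperror$, and care is needed both to handle the ``straddling'' case where one term is clipped and the other is not, and to make the denominator-ratio bound valid uniformly regardless of which of $D_1,D_2$ is the larger.
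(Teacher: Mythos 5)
Your step 1 is correct and is in fact a cleaner, case-by-case version of the step the paper handles by asserting that $|\min(\cliperror,ab)-\min(\cliperror,b)|$ is maximized at $ab=\cliperror$, $b=\cliperror/a$, yielding $\cliperror(1-a^{\raisebox{.2ex}{$\scriptscriptstyle-1$}})$; your three-case analysis makes explicit why the clipping caps the otherwise unbounded numerator. Your two sandwiches, $(1-\epsilon)(|\vtrue(s_i)|+\relerror) \le |\vsampledtrue(s_i)|+\relerror \le (1+\epsilon)(|\vtrue(s_i)|+\relerror)$ from Assumption 1 and $(1-\epsilon)(|\vsampledtrue(s_i)|+\relerror) \le |\vsampled(s_i)|+\relerror \le (1+\epsilon)(|\vsampledtrue(s_i)|+\relerror)$ from Assumption 2, are also correct, and their upper halves are exactly the chain the paper uses to get $D_2 \le (1+\epsilon)^2 D_1$, where $D_1 = |\vtrue(s_i)|+\relerror$ and $D_2 = |\vsampled(s_i)|+\relerror$.

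The gap is your final chaining step. From $D_2/D_1 \in [(1-\epsilon)^2,(1+\epsilon)^2]$ you conclude $\min(D_1,D_2)/\max(D_1,D_2) \ge (1+\epsilon)^{-2}$, but this fails when $D_2 < D_1$: there the relevant ratio is $D_2/D_1$, for which the sandwich gives only $D_2/D_1 \ge (1-\epsilon)^2$, and $(1-\epsilon)^2 < (1+\epsilon)^{-2}$ because $(1-\epsilon)^2(1+\epsilon)^2 = (1-\epsilon^2)^2 < 1$. Nor can the stronger ratio be recovered by rechaining: the assumptions normalize errors by $|\vtrue(s_i)|+\relerror$ and $|\vsampledtrue(s_i)|+\relerror$ respectively, so the reverse direction only produces $(1-\epsilon)^{-1}$-type factors. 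Feeding the correct uniform ratio $(1-\epsilon)^2$ into your step-1 inequality gives $\cliperror\bigl(1-(1-\epsilon)^2\bigr) = \cliperror\,\epsilon(2-\epsilon)$, which is strictly larger than the claimed $\cliperror\bigl(1-(1+\epsilon)^{-2}\bigr) = \cliperror(2\epsilon+\epsilon^2)/(1+\epsilon)^2$ for all $\epsilon \in (0,1)$, since $(2\epsilon-\epsilon^2)(1+\epsilon)^2 - (2\epsilon+\epsilon^2) = \epsilon^2(2-\epsilon^2) > 0$; so your repaired argument proves a weaker constant than the lemma states. Ironically, you flagged precisely this as the place needing care. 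It is worth noting that the paper's own proof establishes only the one-sided bound $D_2 \le (1+\epsilon)^2 D_1$ and then bounds the difference by $|\min(\cliperror,ab)-\min(\cliperror,b)|$ with $a=(1+\epsilon)^2$, which is valid only when the $\vtrue$-denominated term is the larger one (i.e., $D_1 \le D_2$); the case $D_1 > D_2$, where your claim breaks, is silently omitted there as well, so your attempt agrees with the paper on the case the paper covers and your error sits exactly in the direction the paper leaves unaddressed.
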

	\begin{proof}
		We need to bound the difference due to the difference in normalizer. To do so, we simply need to find a constant $\beta > 0$ such that
		$\min\left(\cliperror, \frac{|\vapprox(s_i) - \vsampled(s_i)|}{|\vtrue(s_i)| + \relerror} \right) \ge \beta$
		
		The key is to lower bound $|\vtrue(s_i)| + \relerror$, which results in an upper bound on the first term and consequently an upper bound on the difference between the two terms.
		\begin{align*}
		\frac{|\vsampled(s_i)| + \relerror}{|\vtrue(s_i)| + \relerror} 
		&\le \frac{|\vsampled(s_i) - \vsampledtrue(s_i)| + |\vsampledtrue(s_i)| + \relerror}{|\vtrue(s_i)| + \relerror} \\
		&= \frac{|\vsampled(s_i) - \vsampledtrue(s_i)|}{|\vtrue(s_i)| + \relerror} + \frac{|\vsampledtrue(s_i)| + \relerror}{|\vtrue(s_i)| + \relerror} \\
		&\le \frac{\epsilon (| \vsampledtrue(s_i) | + \relerror)}{{| \vtrue(s_i)| + \relerror}} + \frac{|\vsampledtrue(s_i)| + \relerror}{|\vtrue(s_i)| + \relerror} \\
		&= \frac{(1+\epsilon) (| \vsampledtrue(s_i) | + \relerror)}{{| \vtrue(s_i)| + \relerror}} 
		\end{align*}
		where the second inequality is due to Assumption \ref{ass_two}.
		Now further
		\begin{align*}
		\frac{ (| \vsampledtrue(s_i) | + \relerror)}{| \vtrue(s_i)| + \relerror} 
		&\le  \frac{ | \vsampledtrue(s_i) - \vtrue(s_i) |}{| \vtrue(s_i)| + \relerror}  + \frac{ |\vtrue(s_i) | + \relerror}{| \vtrue(s_i)| + \relerror} \\
		&\le  \frac{ \epsilon(|\vtrue(s_i)| + \relerror)}{| \vtrue(s_i)| + \relerror}  + 1\\
		&=  \epsilon + 1
		\end{align*}
		giving
		\begin{align*}
		\frac{|\vsampled(s_i)| + \relerror}{|\vtrue(s_i)| + \relerror} &\le (1+\epsilon)^2 \implies 
		|\vtrue(s_i)| + \relerror \ge 
		\frac{|\vsampled(s_i)| + \relerror}{(1+\epsilon)^2}.
		\end{align*}

		So, for $a = (1+\epsilon)^2$ and $b = \frac{|\vapprox(s_i) - \vsampled(s_i)|}{|\vsampled(s_i)| + \relerror}$, 
		the term $|\minc{ab} - \minc{b}|$ upper bounds the difference. 
		Because $a > 1$ and $b > 0$, this term $|\minc{ab} - \minc{b}|$ is maximized when $a b = c$, and $b = c/a$. In the worst case, therefore, $|\min(c,ab) - \min(c,b)| \le c(1-a^\inv)$ which finishes the proof.  
	\end{proof}
	
	
	\subsection{SATISFYING THE ASSUMPTIONS}
	
	The bounds above relied heavily on accurate sample estimates of $\vtrue(s_i)$. 
	To obtain Assumption \ref{ass_one}, we need to rollout trajectories sufficiently far to ensure that truncated sampled returns
	do not incur too much bias.
	For problems with discounting, for $\gamma < 1$, the returns can be truncated once $\gamma^\lenrollout$ becomes sufficiently small,
	as the remaining terms in the sum for the return have negligible weight.
	For episodic problems with no discounting, it is likely that trajectories need to be
	simulated until termination, since rewards beyond the truncation horizon would not be discounted and so could have considerable weight.
	
	We show how to satisfy Assumption \ref{ass_one}, for the discounted setting. Note that for the trivial setting of $\rmax = 0$, it is sufficient to use
	$\lenrollout = 1$, so we assume $\rmax > 0$.  
	\begin{lemma}\label{lem_discounted}
		For $\gamma < 1$ and $\rmax > \epsilon \tau (1-\gamma)$, if 
		\begin{equation}
		\lenrollout = \left\lceil \frac{\log(\epsilon \tau (1-\gamma)) - \log(\rmax)}{\log \gamma} \right\rceil \label{eq_rollout}
		\end{equation}
		then $\vsampledtrue(s)$ satisfies Assumption \ref{ass_one}: 
		\begin{equation}
		| \vsampledtrue(s) -  \vtrue(s) | \le \epsilon \left(| \vtrue(s) | + \relerror \right)
		\end{equation}
	\end{lemma}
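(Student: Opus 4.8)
The plan is to exploit the fact that the gap $\vtrue(s) - \vsampledtrue(s)$ is precisely the discounted tail of the reward series that is discarded by truncating each rollout at length $\lenrollout$. Writing the infinite-horizon value as $\vtrue(s) = \E[\sum_{k=0}^\infty \gamma^k R_{k+1}]$ and the truncated value as $\vsampledtrue(s) = \E[\sum_{k=0}^{\lenrollout-1}\gamma^k R_{k+1}]$, linearity of expectation gives
\begin{equation*}
\vtrue(s) - \vsampledtrue(s) = \E\Big[\sum_{k=\lenrollout}^\infty \gamma^k R_{k+1}\Big],
\end{equation*}
so the entire lemma reduces to controlling this tail.

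First I would bound the tail in absolute value using the uniform reward bound $|R_{k+1}| \le \rmax$ together with the elementary inequality $|\E[\cdot]| \le \E[|\cdot|]$, yielding
\begin{equation*}
|\vtrue(s) - \vsampledtrue(s)| \le \sum_{k=\lenrollout}^\infty \gamma^k \rmax = \frac{\rmax\, \gamma^\lenrollout}{1-\gamma}.
\end{equation*}
Next, since $|\vtrue(s)| \ge 0$, the target quantity satisfies $\epsilon(|\vtrue(s)| + \relerror) \ge \epsilon \relerror$, so it suffices to force $\frac{\rmax \gamma^\lenrollout}{1-\gamma} \le \epsilon\relerror$. Rearranging, this is equivalent to $\gamma^\lenrollout \le \frac{\epsilon\relerror(1-\gamma)}{\rmax}$, and taking logarithms converts the requirement into a linear condition on $\lenrollout$ that is solved exactly by the stated choice \eqref{eq_rollout}.

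The main obstacle --- really the only delicate point --- is bookkeeping the inequality directions. Since $\gamma < 1$ we have $\log\gamma < 0$, so dividing the log-inequality $\lenrollout \log\gamma \le \log(\epsilon\relerror(1-\gamma)) - \log(\rmax)$ by $\log\gamma$ reverses it, giving $\lenrollout \ge \frac{\log(\epsilon\relerror(1-\gamma)) - \log(\rmax)}{\log\gamma}$; taking the ceiling then yields the smallest admissible integer length, which is exactly \eqref{eq_rollout}. The hypothesis $\rmax > \epsilon\relerror(1-\gamma)$ is what makes this well-posed: it guarantees $\frac{\epsilon\relerror(1-\gamma)}{\rmax} < 1$, hence the numerator $\log(\epsilon\relerror(1-\gamma)) - \log(\rmax)$ is negative and $\lenrollout$ is a positive integer. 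I would close by substituting this $\lenrollout$ back to confirm $\frac{\rmax \gamma^\lenrollout}{1-\gamma} \le \epsilon\relerror \le \epsilon(|\vtrue(s)| + \relerror)$, which is exactly the conclusion required by Assumption \ref{ass_one}.
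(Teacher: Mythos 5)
Your proposal is correct and takes essentially the same route as the paper's proof: both bound $|\vsampledtrue(s) - \vtrue(s)|$ by the discarded discounted tail $\rmax \gamma^{\lenrollout}/(1-\gamma)$, reduce the target $\epsilon(|\vtrue(s)| + \relerror)$ to the weaker requirement $\epsilon\relerror$, and solve for $\lenrollout$ by taking logarithms. Your explicit bookkeeping of the sign of $\log\gamma$ and of why the hypothesis $\rmax > \epsilon\relerror(1-\gamma)$ makes the choice of $\lenrollout$ well-posed simply spells out steps the paper leaves implicit.
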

	\begin{proof}
		The first component can be bounded as
		\begin{align*}
		| \vsampledtrue(s) - \vtrue(s) | &\le \left| E\left[\sum_{k=0}^\infty \gamma^{k} R_{k}\right] - E\left[ \sum_{k=0}^{\lenrollout -1} \gamma^{k} R_k\right] \right|\\
		&\le \rmax\left(\frac{1}{1-\gamma} -  \frac{1-\gamma^{\lenrollout}}{1-\gamma} \right)\\
		&= \rmax \frac{\gamma^{\lenrollout}}{1-\gamma} 
		\end{align*}
		giving
		\begin{align*}
		\frac{| \vsampledtrue(s) - \vtrue(s_i) |}{| \vtrue(s)| + \relerror}
		&\le  \rmax \frac{\gamma^{\lenrollout}}{\tau(1-\gamma)}
		.
		\end{align*}
		Setting $\lenrollout$ as in \eqref{eq_rollout} ensures $\rmax \frac{\gamma^{\lenrollout}}{\tau(1-\gamma)} \le \epsilon$,
		completing the proof. 
	\end{proof}
	
	For Assumption 2, we need a stopping rule for sampling truncated returns that ensures $\vsampled$ is within $\epsilon$
	of the true expected value of the truncated returns, $\vsampledtrue$. The idea is to continue sampling truncated returns, until the confidence interval around the mean estimate shrinks sufficiently to ensure, with high probability, that the values estimates are within $\epsilon$ of the true values. Such stopping rules have been designed for non-negative random variables \citep{domingos2001ageneral,dagum2006anoptimal},
	and extended to more general random variables \citep{mnih2008empirical}. 
	We defer the development of such an algorithm for this setting until the next section.
	
	
	\section{THE ROLLOUT ALGORITHM}\label{sec_rollouts}
	
	We can now design a high-confidence algorithm for estimating the accuracy of a value function. 
	Practically, the most important number to reduce is $\nsampledstates$,
	because these values will be stored and used for comparisons on each step.
	The choice of a clipped loss, however, makes it more manageable to control $\nsampledstates$. 
	In this section, we focus more on how much the variability in trajectories, and trajectory length, impact the number of required samples.
	
	The general algorithm framework is given in Algorithm 1.
	The algorithm is straightforward once given an algorithm to sample rollouts from a given state. 
	The rollout algorithm is where development can be directed, to reduce the required number of samples. 
	This rollout algorithm needs to be designed to satisfy Assumptions \ref{ass_one} and \ref{ass_two}. 
	We have already shown how to select trajectory lengths to satisfy Assumption \ref{ass_one}. 
	Below, we describe how to select $\nsampledstates$ and how to satisfy Assumption \ref{ass_two}.

	\begin{algorithm}[htp!]
		\caption{Offline computation of $\vsampled$, to get high-confidence estimates of value error}
		\label{alg_error}
		\begin{algorithmic}[1]
			\State $\triangleright$ Input $\errebstop,\delta, \tau$
			\State $\triangleright$ Compute the values $\vsampled$ once offline and store for repeated use
			\State Set $\epsilonm = \epsilon/2$ and $\erralg = \epsilon/(2 (1+\cliperror))$
			\State $ \nsampledstates \gets \frac{ \log(4 \nexperiments/\delta) \cliperror^2}{2\errstates^2}$
			\For {$1, \ldots, \nsampledstates$}
			\State Sample $s_i \sim d$
			\State $\vsampled(s_i) \gets$ Algorithm \ref{alg_ebgstop} with $\erralg, \frac{\delta}{2\nsampledstates}, \relerror$   
			\EndFor
		\end{algorithmic}
	\end{algorithm}
	
	\newcommand{\lbtrue}{\widehat{\lb}}
	\newcommand{\ubtrue}{\widehat{\ub}}
	
	\begin{algorithm}[t!]
		\caption{High-confidence Monte carlo estimate of the expected return for a state}
		\label{alg_ebgstop}
		\begin{algorithmic}[1]
			\State $\triangleright$ Input $\epsilon,\delta, \relerror$, state $s$
			\State $\triangleright$ Output an $\epsilon,\delta,\relerror$-accurate approx. $\vsampled(s_i)$ of $\vtrue(s_i)$
			\State $\lb \gets 0$,  $\ub \gets \infty$
			\State $\lbtrue \gets -\infty$,  $\ubtrue \gets \infty$
			\State $\gbar \gets 0$, $M \gets 0$ 
			\State $j \gets 1$, $h \gets 0$, $\beta \gets 1.1$, $p \gets 1.1$, $\alpha \gets 1$, $x \gets 1$
			\While {$(1+\epsilon)\lb + 2\epsilon \relerror < (1-\epsilon) \ub$ or $\lb=0$}
			\State $g \gets $ sample return that satisfies Assumption \ref{ass_one} (e.g., see Algorithm \ref{alg_discounted} in Appendix \ref{app_algorithms})
			\State $\Delta \gets g - \gbar$
			\State $\gbar \gets \gbar + \frac{\Delta}{j}$
			\State $M \gets M + \Delta (g - \gbar)$
			\State $\sigma \gets \sqrt{M/j}$
			\State $\triangleright$ Compute the confidence interval
			\If{$j \ge \floor{\beta^h}$}
			\State $h \gets h + 1$
			\State $\alpha \gets \floor{\beta^h}/\floor{\beta^{h-1}}$
			\State $x \gets -\alpha \log \cfrac{\delta(p-1)}{3 p h^p} $  
			\EndIf
			\State $c_j \gets \sigma \sqrt{\frac{2x}{j}} + \frac{3 \vmax x}{ j}$
			\State $\lb \gets \max(\lb, | \gbar | - c_j)$
			\State $\ub \gets \min(\ub, | \gbar | + c_j)$
			\State $\lbtrue \gets \max(\lbtrue, \gbar  - c_j)$
			\State $\ubtrue \gets \min(\ubtrue, \gbar  + c_j)$
			\If{$\frac{\ubtrue - \lbtrue}{2} \leq \epsilon \tau$}
			\Return $\frac{\ubtrue + \lbtrue}{2}$	
			\EndIf
			\State $j = j+1$
			\EndWhile
			\Return $\frac{\sign{\gbar}}{2} ((1+\epsilon)\lb + (1-\epsilon)\ub)$
		\end{algorithmic}
	\end{algorithm}

	\paragraph{Specifying the number of sampled states $\nsampledstates$.}
	For the number of required samples for the outer loop in Algorithm 1, we need enough samples to match the bound in Lemma \ref{lem_one}.
	\begin{align}
	\errstates = \sqrt{\frac{\log(4 \nexperiments/\delta) \cliperror^2}{2\nsampledstates}}
	\implies \nsampledstates = \frac{ \log(4 \nexperiments /\delta) \cliperror^2}{2\errstates^2}
	\end{align}
	$\nsampledstates$ is chosen as $\ceil{\frac{ \log(4 \nexperiments /\delta) \cliperror^2}{2\errstates^2}} \geq \errstates$ and thus we are being slightly conservative regarding the error to ensure correctness with high probability. We opt for a separate choice of $\errstates$ for this part of the bound,
	because it is completely separate from the other errors.
	This number $\errstates$ could be chosen slightly larger,
	to reduce the number of required sampled states to compare to,
	whereas $\errebstop$ might need to be smaller depending on the choice
	of $\cliperror$ and $\relerror$. Separating them explicitly
	can significantly reduce the $\nsampledstates$ in the outer loop,
	both improving time and storage, as well as later comparison time,
	without impacting the accuracy of the algorithm. 
	
	\paragraph{Satisfying Assumption \ref{ass_two}.}
	
	Our goal is to get an ($\errebstop,\delta,\relerror$)-approximation of $\vsampled(s_i)$, with a feasible number of samples.
	In many cases, it is difficult to make parametric assumptions about returns in reinforcement learning. 
	A simple strategy is to use a stopping rule for generating returns, based on general concentration inequalities---like Hoeffding's bound---that make few assumptions about the random variables. If we had a bit more information, however, such as the variance of the returns, we could obtain a tighter bound, using Bernstein's inequality and so reduce the number of required samples. We cannot know this variance a priori, but fortunately an empirical Bernstein bound has been developed  \citep{mnih2008empirical}. Using this bound,  \citet{mnih2008empirical} designed EBGStop, which incrementally estimates variance and significantly reduces the number of samples required to get high-confidence estimates.
	
	EBGStop can be used, without modification, given a mechanism to sample truncated returns that satisfy Assumption \ref{ass_one}. However, we generalize the algorithm to allow for our less restrictive condition $| \vsampledtrue(s_i) -  \vsampled(s_i) | \le \epsilon (| \vsampledtrue(s_i) | + \relerror)$, as opposed to the original algorithm which ensured $| \vsampledtrue(s_i) -  \vsampled(s_i) | \le \epsilon | \vsampledtrue(s_i) |$. When $\tau = 0$ in our algorithm, it reduces to the original; since this is a generalization on that algorithm, we continue to call it EBGStop. 
	This modification is important when $\vtrue(s_i) = 0$, since this would require $\vsampledtrue(s_i) = \vtrue(s_i)$ when $\tau = 0$. 
	For $\tau > 0$, once the accuracy is within $\tau$, the algorithm can stop. 
	The Algorithm is summarized in Algorithm \ref{alg_ebgstop}. The proof follows closely to the proof for EBGStop; we include it in Appendix \ref{app_proofs}.
	Algorithm 2 uses geometric sampling, like EBGStop, to improve sample efficiency. The idea is to avoid checking the stopping condition after every sample. Instead, for some $\beta > 1$, the condition is checked after $\beta^k$ samples; the next check occurs at $\beta^{k+1}$. This modification improves sample efficiency from a multiplicative factor of $\log(\frac{R}{\epsilon |\mu|})$ to $\log\log(\frac{R}{\epsilon |\mu|})$, where $R$ is the range of the random variables and $\mu$ is the mean. 
	
	\begin{lemma}\label{lemma_ebstop}
		Algorithm 2 returns an $\errebstop,\delta,\relerror$-approximation $\vsampled(s_i)$:
		\begin{align*}
		| \vsampledtrue(s_i) -  \vsampled(s_i) | &\le \epsilon (| \vsampledtrue(s_i) | + \relerror)
		\end{align*}
	\end{lemma}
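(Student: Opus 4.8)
The plan is to follow the analysis of the original EBGStop \citep{mnih2008empirical} almost verbatim, isolating the two places where the extra additive slack $\relerror$ enters: the stopping test $(1+\epsilon)\lb + 2\epsilon\relerror \ge (1-\epsilon)\ub$ and the early-return test $(\ubtrue - \lbtrue)/2 \le \epsilon\relerror$. First I would establish the good event on which all the confidence intervals are valid simultaneously. Writing $\mu \defeq \vsampledtrue(s_i)$ and $\gbar_j$ for the empirical mean after $j$ samples, the empirical Bernstein inequality gives, at any fixed checkpoint, $|\gbar_j - \mu| \le c_j$ with $c_j = \sigma\sqrt{2x/j} + 3\vmax x/j$ (returns lie in $[-\vmax,\vmax]$). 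Because the stopping condition is examined only on the geometric grid $j \in \{\lfloor\beta^h\rfloor\}_{h\ge 1}$, a union bound over $h$ with per-check failure probability $\delta(p-1)/(3ph^p)$ and $\sum_{h\ge1} h^{-p} = \zeta(p) < \infty$ for $p>1$ makes the total failure probability at most $\delta$; the choice $x = -\alpha\log\frac{\delta(p-1)}{3ph^p}$ in the algorithm is exactly what realises this. I would simply cite this part, since it is identical to the original and the $\relerror$ generalisation does not touch it.

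Second, I would condition on that good event and record its consequences. Since $\gbar_j - c_j \le \mu \le \gbar_j + c_j$ at every checkpoint, the running quantities satisfy $\lbtrue \le \mu \le \ubtrue$, and by the reverse triangle inequality $||\gbar_j| - |\mu|| \le c_j$, so $\lb \le |\mu| \le \ub$ as well, with $\lb \ge 0$. Moreover, whenever $\lb > 0$ there is a checkpoint at which $c < |\gbar|$, which pins $\mu$ to the same side of $0$ as $\gbar$, giving $\sign{\mu} = \sign{\gbar}$; this is the only sign bookkeeping needed.

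Third, I would treat the two exit branches. If the algorithm exits through the early return with $(\ubtrue - \lbtrue)/2 \le \epsilon\relerror$, then since $\mu \in [\lbtrue, \ubtrue]$ the returned midpoint is within $(\ubtrue - \lbtrue)/2 \le \epsilon\relerror \le \epsilon(|\mu| + \relerror)$ of $\mu$, which is immediate. The substantive case is exit from the while loop, where $\lb > 0$ and $(1+\epsilon)\lb + 2\epsilon\relerror \ge (1-\epsilon)\ub$, and the output is $\hat\mu = \tfrac{\sign{\gbar}}{2}((1+\epsilon)\lb + (1-\epsilon)\ub)$. Using $\sign{\mu} = \sign{\gbar}$ reduces the goal to $||\hat\mu| - |\mu|| \le \epsilon(|\mu|+\relerror)$, i.e. two one-sided bounds on $|\hat\mu| = \tfrac12((1+\epsilon)\lb + (1-\epsilon)\ub)$. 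For the upper bound I would feed the stopping inequality $(1-\epsilon)\ub \le (1+\epsilon)\lb + 2\epsilon\relerror$ together with $\lb \le |\mu|$ into $|\hat\mu|$ to get $|\hat\mu| \le (1+\epsilon)|\mu| + \epsilon\relerror$; for the lower bound I would use the same inequality as $(1+\epsilon)\lb \ge (1-\epsilon)\ub - 2\epsilon\relerror$ together with $\ub \ge |\mu|$ to get $|\hat\mu| \ge (1-\epsilon)|\mu| - \epsilon\relerror$. The point is that the factor $2$ in front of $\epsilon\relerror$ is exactly halved by the $\tfrac12$ in $\hat\mu$, so both sides close at $\epsilon(|\mu|+\relerror)$, recovering the original relative guarantee when $\relerror = 0$.

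The main obstacle is not any single inequality but making the two-sided substitution in the while-exit case line up so that the slack collapses from $2\epsilon\relerror$ to exactly $\epsilon\relerror$; this is where the specific constant in the stopping test is forced, and it is the one spot where the proof genuinely departs from \citet{mnih2008empirical}. A secondary point I would address briefly is termination: because $c_j \to 0$ as $j \to \infty$ on the good event, the interval widths shrink, so either the relative test or the absolute $\epsilon\relerror$ test must eventually fire, ensuring the algorithm halts and returns a value satisfying the claimed $(\epsilon,\delta,\relerror)$-approximation.
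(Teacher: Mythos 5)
Your proposal is correct and follows essentially the same route as the paper's own proof: condition on the empirical-Bernstein good event (with the geometric-grid union bound inherited unchanged from EBGStop), handle the early-return branch via the midpoint being within the half-width $\le \epsilon\relerror$, and in the while-exit branch use $\lb \le |\vsampledtrue(s_i)| \le \ub$ plus the stopping inequality to sandwich $|\vsampled(s_i)|$ between $(1\pm\epsilon)|\vsampledtrue(s_i)| \pm \epsilon\relerror$, with the same sign argument ($\lb > 0$ forces $\sign{\gbar} = \sign{\vsampledtrue(s_i)}$) converting the bound on $\bigl||\vsampled| - |\vsampledtrue|\bigr|$ into one on $|\vsampled - \vsampledtrue|$. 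If anything, your lower-bound step $|\hat\mu| \ge (1-\epsilon)|\mu| - \epsilon\relerror$ states the signs correctly where the paper's displayed inequality contains a typo ($+2\epsilon\relerror$ and $+\epsilon\relerror$ where minus signs are needed), and your brief termination remark is a small addition the paper omits.
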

	
	\begin{corollary}
		\label{corollary:1}
		For any $0 < \errstates$ and $0 < \erralg < 1$,
		Algorithm \ref{alg_error} returns an $\epsilon,\delta$-accurate approximation: with probability at least $1-\delta$,
		\begin{equation*}
		\left| \clipf(\vapprox,\vtrue)  - \frac{1}{\nsampledstates} \sum_{i = 1}^\nsampledstates \clipfs(\vapprox(s_i),  \vsampled(s_i)) \right|  \le \epsilon
		\end{equation*}
		where
		\begin{equation}
		\epsilon = \errstates + 2 (1+\cliperror) \erralg
		\end{equation}
	\end{corollary}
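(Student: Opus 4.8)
The plan is to assemble the three error terms already bounded inside Theorem~\ref{thm_main} and show that the parameter choices of Algorithm~\ref{alg_error} collapse them into the claimed $\epsilon = \errstates + 2(1+\cliperror)\erralg$. First I would check that the hypotheses of Theorem~\ref{thm_main} are actually met by the output of Algorithm~\ref{alg_error}. Each call to Algorithm~\ref{alg_ebgstop} is made with accuracy $\erralg$, confidence $\tfrac{\delta}{2\nsampledstates}$, and tolerance $\relerror$, and the returns it draws satisfy Assumption~\ref{ass_one} by construction (the rollout lengths are chosen as in Lemma~\ref{lem_discounted}). By Lemma~\ref{lemma_ebstop} each such call returns an $(\erralg,\tfrac{\delta}{2\nsampledstates},\relerror)$-approximation, so a union bound over the $\nsampledstates$ states gives that $|\vsampledtrue(s_i)-\vsampled(s_i)| \le \erralg(|\vsampledtrue(s_i)|+\relerror)$ holds simultaneously for all $i$ with probability at least $1 - \nsampledstates\cdot\tfrac{\delta}{2\nsampledstates} = 1-\delta/2$. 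This is precisely Assumption~\ref{ass_two} with accuracy parameter $\erralg$, while Assumption~\ref{ass_one} holds deterministically.

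Then I would invoke Theorem~\ref{thm_main} with the assumption accuracy instantiated at $\erralg$, which yields, with probability at least $1-\delta$,
\begin{equation*}
\left| \clipf(\vapprox,\vtrue) - \clipfhat(\vapprox,\vsampled) \right| \le \sqrt{\tfrac{\log(4\nexperiments/\delta)\cliperror^2}{2\nsampledstates}} + 2\erralg + \cliperror\bigl(1-(1+\erralg)^{-2}\bigr),
\end{equation*}
the three summands being \eqref{eq_one}, \eqref{eq_two} and \eqref{eq_four}. The remaining work is purely to bound each summand. For \eqref{eq_one}, the choice $\nsampledstates = \lceil \log(4\nexperiments/\delta)\cliperror^2 / (2\errstates^2)\rceil$ forces $\nsampledstates \ge \log(4\nexperiments/\delta)\cliperror^2/(2\errstates^2)$, so substituting shows the first term is at most $\errstates$. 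The second term is already exactly $2\erralg$.

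The only genuinely non-routine step is \eqref{eq_four}. Here I would establish the elementary inequality $1-(1+\erralg)^{-2} \le 2\erralg$ for $\erralg > 0$: clearing denominators, it is equivalent to $2\erralg+\erralg^2 \le 2\erralg(1+\erralg)^2$, i.e. $0 \le 3\erralg^2 + 2\erralg^3$, which always holds. Hence $\eqref{eq_four} = \cliperror\bigl(1-(1+\erralg)^{-2}\bigr) \le 2\cliperror\erralg$. (The restriction $\erralg<1$ in the statement is needed not for this algebra but so that the EBGStop stopping test $(1+\erralg)\lb < (1-\erralg)\ub$ in Algorithm~\ref{alg_ebgstop} is well posed.) Summing the three bounds gives
\begin{equation*}
\left| \clipf(\vapprox,\vtrue) - \clipfhat(\vapprox,\vsampled) \right| \le \errstates + 2\erralg + 2\cliperror\erralg = \errstates + 2(1+\cliperror)\erralg = \epsilon,
\end{equation*}
as claimed. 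The main point to handle carefully is the probability bookkeeping: the $\delta/2$ budget spent on the per-state EBGStop failures (via the union bound above) must be kept disjoint from the $\delta/2$ spent on the Hoeffding concentration inside Lemma~\ref{lem_one}, so that a final union bound delivers the overall $1-\delta$ confidence. Everything else is the routine algebra outlined above.
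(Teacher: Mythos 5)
Your proof is correct and follows exactly the route the paper intends: the paper states this corollary without an explicit proof, and your assembly---verifying Assumptions~\ref{ass_one} and~\ref{ass_two} via Lemma~\ref{lem_discounted}, Lemma~\ref{lemma_ebstop} and a union bound over the $\nsampledstates$ calls with confidence $\delta/(2\nsampledstates)$ each, then instantiating Theorem~\ref{thm_main} at accuracy $\erralg$ and bounding the three terms---is precisely the intended argument. Your only addition is making explicit the elementary inequality $1-(1+\erralg)^{-2} \le 2\erralg$, which the paper uses silently in passing from Lemma~\ref{lem_three}'s bound to the stated form $\errstates + 2(1+\cliperror)\erralg$, and your probability bookkeeping matches the paper's split of $\delta/2$ for Hoeffding and $\delta/2$ for the rollout estimates.
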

	Algorithm \ref{alg_error} uses this theorem, for a given desired level of accuracy $\epsilon$. 
	To obtain this level of accuracy, $\epsilon_m = \epsilon/2$ and $\erralg$ given to Algorithm \ref{alg_ebgstop} is set to ensure $2 (1+\cliperror) \erralg = \epsilon$. 
	
	%
	%
	%

	\begin{figure*}
		\centering
		\begin{subfigure}[b]{0.490\textwidth}
			\centering
			\includegraphics[width=\textwidth]{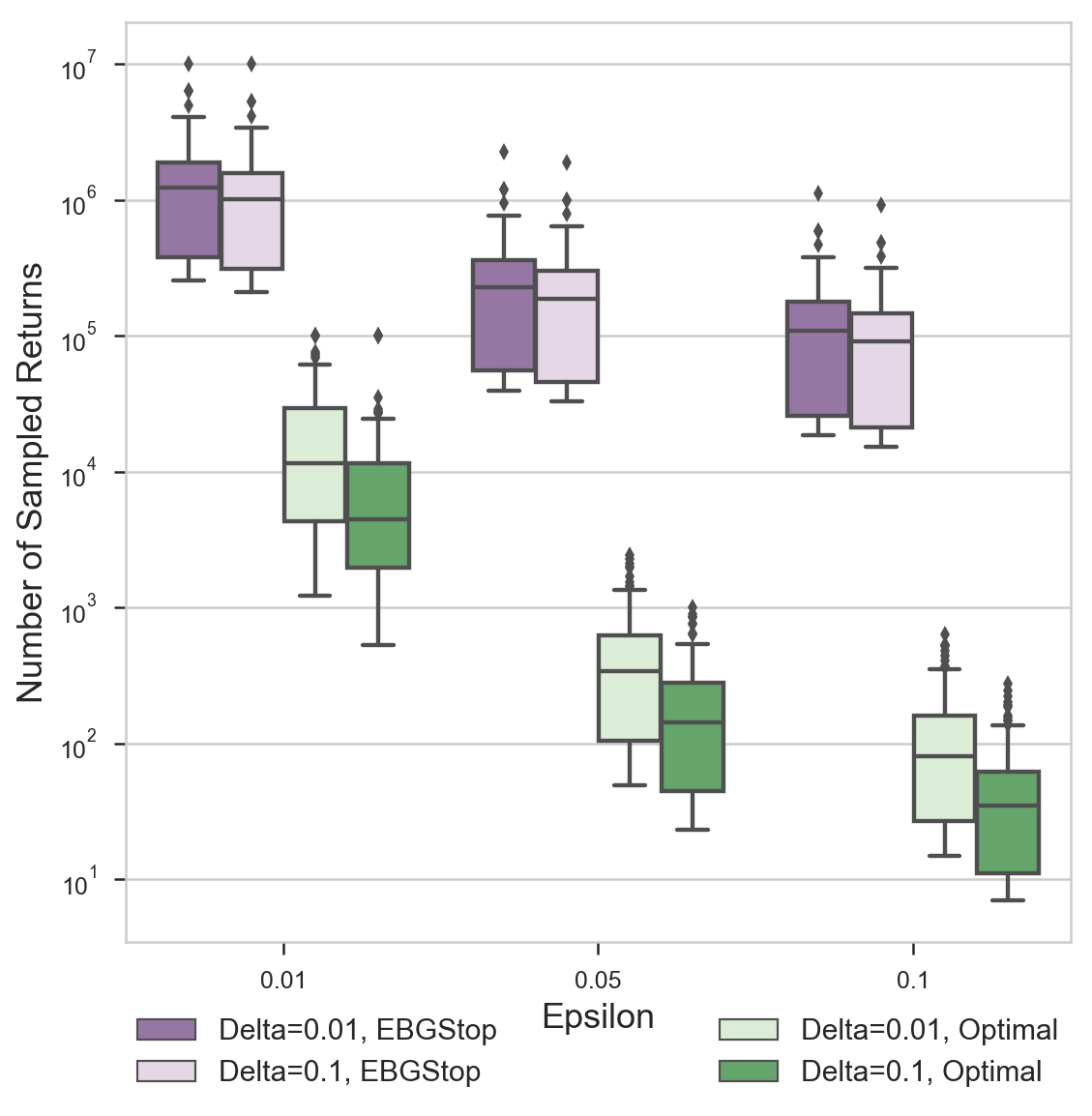}
			\caption{Puddle World}
			\label{fig:PW_A23}
		\end{subfigure}
		~
		\begin{subfigure}[b]{0.490\textwidth}
			\centering
			\includegraphics[width=\textwidth]{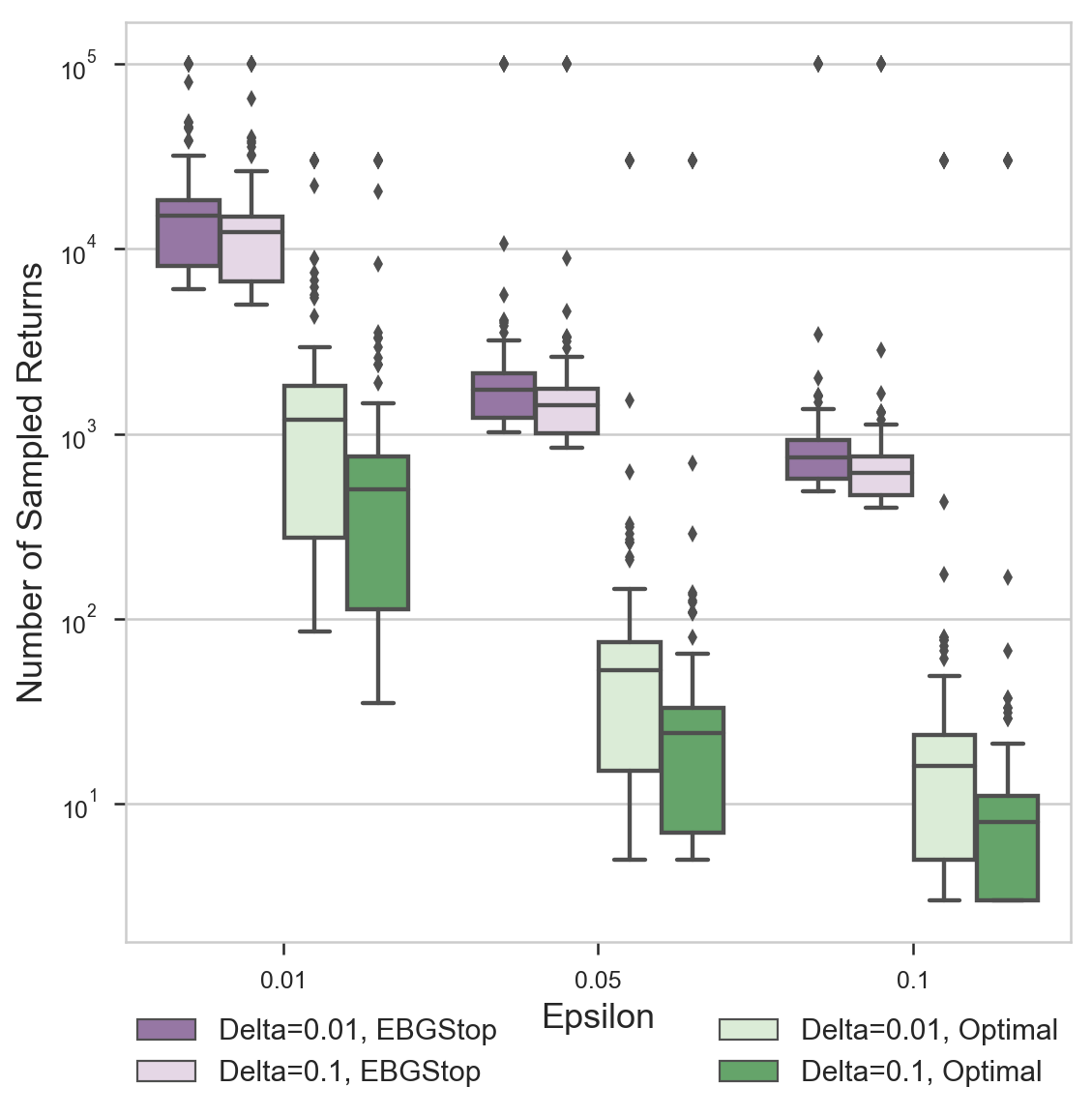}
			\caption{Mountain Car}
			\label{fig:MC_06}
		\end{subfigure}
		
		\caption{The number of sampled returns to obtain high-confidence estimates $\vsampled$ of the true values $\vtrue$. 
			The y-axis is logarithmic, with many more samples used for smaller $\epsilon$. The box-plot similarly are logarithmic, and so are actually much larger for smaller $\epsilon$ than larger $\epsilon$. The accuracy $\epsilon$ has a much larger effect on the number of sampled returns that are required, than the probability $\delta$. An additional point of interest is that there are a few states that required significantly more samples for the returns, indicated by the outliers depicted as individual points. } \label{fig:PW_MC_exp_1}
		
	\end{figure*}
	
	\section{EXPERIMENTS ON BENCHMARK PROBLEMS}
	
	We investigate the required number of samples to get with a level of accuracy, for different probability levels.
	We report this for two continuous-state benchmark problems---Mountain Car and Puddle World---which have previously
	been used to compare policy evaluation algorithms. Our goal is to (a) demonstrate how this framework can be used to obtain high-confidence estimates of accuracy and (b) provide some insight into how many samples are needed,
	even for simple reinforcement learning benchmark domains. 
	
	We report the number of returns sampled by Algorithm 2, averaged across several states.
	The domains, Mountain Car and Puddle World, are as specified in the policy evaluation experiments by \citet{pan2017accelerated}. For Mountain Car, we use the energy pumping policy, with $60\%$ random action selection for the three actions. For Puddle World, we used a uniform random policy for the four actions.
	They are both episodic tasks, with a maximum absolute value of $\vmax = 100$ and $\vmax = 8000$ respectively. 
	The variance in Puddle World is particularly high, as it has regions with high-variance, high-magnitude rewards. 
	We sampled $\nsampledstates = 100$ states uniformly across the state-space, to provide some insight into the variability of the number of returns 
	sampled across the state-space. 
	We tested $\epsilon \in \{0.01,0.05,0.1\}$ and $\delta \in \{0.01, 0.1\}$, and set $\tau = 1.0$. 
	We focus here on how many returns need to be sampled, rather than the trajectory length, and so do not use $\cliperror$ nor explicitly compute clipped errors $\clipfs$. 
	
	The results indicate that EBGStop requires a large number of samples, particularly in Puddle World. 
	Figure \ref{fig:MC_06} for Mountain Car and Figure \ref{fig:PW_A23} both indicate that decreasing $\epsilon$ from $0.05$ to $0.01$, to enable higher-accuracy estimates of value function error, causes an exponential increase in the required number of samples, an increase of $10^3$ to $10^4$ for Mountain Car and $10^5$ to $10^6$ for Puddle World. An accuracy level of $0.01$, which corresponds to difference of 1\% for clipped errors, is a typical choice for policy evaluation experiments, yet requires an inordinate number of samples, particularly in Puddle World. 
	
	We further investigated lower bounds on the required number of samples. Though EBGStop is a state-of-the-art stopping algorithm,
	to ensure high-confidence bounds for any distribution with bounded mean and variance, it collects more samples than is actually required.
	To assess its efficiency gap, we also include an idealistic approach to computing the confidence intervals, using repeated subsamples computed from the simulator. By obtaining many, many estimates of the sample average, using $t$ samples of the truncated return, we can estimate the actual variability of the sample average. We provide additional details in Appendix \ref{app_algorithms}. Such a method to compute the confidence interval is not a viable algorithm to reduce the number of samples generated. Rather, the goal here is to report a lower bound on the number of samples required, for comparison and to motivate the amount the sampling algorithm could be improved. The number of samples generated by EBGStop is typically between 10 to 100 times more than the optimal number of samples, which indicates that there is much room to improve sample efficiency.

	\section{CONCLUSION}
	
	In this work, we present the first principled approach to obtain high-confidence error estimates of learned value functions. 
	Our strategy is focused on the setting tackled by reinforcement learning empiricists, comparing value function-learning algorithms. In this context, accuracy of value estimates, for multiple algorithms, need to be computed repeatedly, every few steps with increasing data given to the learning algorithms. 
	We provide a general framework for such a setting, where we store estimates of true value functions using samples of truncated returns.
	The framework for estimating true values for comparison is intentionally generic, to enable any (sample-efficient) stopping algorithm to be used.
	We propose one solution, which uses empirical Bernstein bounds, to significantly reduce the required number of samples over other concentration inequalities, such as Hoeffding's bound.    
	
	This paper highlights several open challenges. As demonstrated in the experiments, there is a large gap between the actual required number of samples and that provided by the algorithm using an empirical Bernstein stopping-rule. For some simulators, this overestimate could result in a prohibitively large number of samples. Although this is a problem more generally faced by the sampling literature, it is particularly exacerbated in reinforcement learning where the variability across states and returns can be high, with large maximum values. An important avenue, then, is to develop more sample-efficient sampling algorithms to make high-confidence error estimates feasible for a broader range of settings in reinforcement learning.
	
	Another open challenge is to address how to sample states $\{s_1, \ldots, s_\nsampledstates\}$. This paper is agnostic to how these states are obtained. However, it is not always straightforward to sample these from a desired distribution. Some choices are simple, such as randomly selecting these across the state space. For other cases, it is more complicated, such as sampling these from the stationary distribution of the behaviour policy, $\dmu$. The typical strategy is to run $\mu$ for a burn-in period, so that afterwards it is more likely for states to be sampled from the stationary distribution. The theoretical effectiveness of this strategy, however, is not yet well-understood. There has been work estimating empirical mixing times \citep{hsu2015mixing} and some work bounding the number of samples required for burn-in \citep{paulin2015concentration}. Nonetheless, it remains an important open question on how to adapt these results for the general reinforcement learning setting. 
	
	One goal of this paper has been to highlight an open problem that has largely been ignored by reinforcement learning empiricists. We hope for this framework
	to stimulate further work in high-confidence estimates of value function accuracy.

	\bibliographystyle{unsrtnat}
	\bibliography{paper}
	
	\clearpage
	
	\appendix
	
	\section{SUPPLEMENTARY LEMMAS}\label{app_proofs}
	
	\begin{lemma}\label{lem_triangle}
		The clipped error $\clipfs$ satisfies the triangle inequality. 
	\end{lemma}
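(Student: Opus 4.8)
The plan is to recognize that the triangle inequality in question really concerns the clipped absolute value $|z|_{\cliperror} \defeq \minc{|z|}$, which is the only piece of $\clipfs$ that matters once the common denominator $|\vtrue(s_i)|+\relerror$ has been factored out, exactly as it is in the proof of Theorem \ref{thm_main}. Writing $g(t) \defeq \minc{t}$ for $t \ge 0$, I would first isolate two elementary properties of $g$ and then assemble the result from them: $g$ is monotone non-decreasing, and $g$ is subadditive, i.e. $g(a+b) \le g(a)+g(b)$ for all $a,b \ge 0$. Monotonicity is immediate, since $t \mapsto \minc{t}$ is the pointwise minimum of two non-decreasing functions of $t$.

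The only step requiring any thought is subadditivity, which I would handle by a short case split. If $a+b \le \cliperror$, then $a,b \le \cliperror$ as well, so $g(a+b) = a+b = g(a)+g(b)$. If instead $a+b > \cliperror$, then $g(a+b) = \cliperror$, and either both $a,b \le \cliperror$, in which case $g(a)+g(b) = a+b > \cliperror$, or at least one summand already exceeds $\cliperror$, in which case $g(a)+g(b) \ge \cliperror$ directly; in both situations $g(a+b) \le g(a)+g(b)$.

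Given these two properties, the triangle inequality for the induced clipped distance $d_{\cliperror}(x,z) \defeq g(|x-z|)$ follows in one line from the ordinary triangle inequality $|x-z| \le |x-y|+|y-z|$: monotonicity gives $g(|x-z|) \le g(|x-y|+|y-z|)$, and subadditivity gives $g(|x-y|+|y-z|) \le g(|x-y|)+g(|y-z|)$, so that $d_{\cliperror}(x,z) \le d_{\cliperror}(x,y)+d_{\cliperror}(y,z)$.

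Finally, I would record the precise consequence invoked in Theorem \ref{thm_main}, namely $|g(|p|)-g(|q|)| \le g(|p-q|)$. The same ingredients yield $|g(|p|)-g(|q|)| \le g(|\,|p|-|q|\,|) \le g(|p-q|)$: the first inequality uses that $g$ is $1$-Lipschitz and bounded by $\cliperror$, so the gap between two clipped values is at most $\min(\cliperror, |\,|p|-|q|\,|) = g(|\,|p|-|q|\,|)$, while the second combines monotonicity with the reverse triangle inequality $|\,|p|-|q|\,| \le |p-q|$. The main obstacle, such as it is, is purely bookkeeping rather than analytic: being careful in the boundary case $a+b > \cliperror$ when one summand alone already saturates the clip, which the case split above dispatches cleanly.
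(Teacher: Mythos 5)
Your proof is correct and takes essentially the same route as the paper: both rest on a case split showing that clipping at $\cliperror$ preserves the triangle inequality, and your subadditivity of $g(t) = \minc{t}$ combined with monotonicity is precisely the paper's case analysis establishing $|x|_{\cliperror} \le |x-y|_{\cliperror} + |y|_{\cliperror}$, just factored into two named properties. Your closing derivation of $|g(|p|)-g(|q|)| \le g(|p-q|)$ via $1$-Lipschitzness is a slightly cleaner way of reaching the difference form $\big|\,|x|-|y|\,\big|_{\cliperror} \le |x-y|_{\cliperror}$ that the paper obtains by applying its clipped triangle inequality in both directions, and it is exactly the form invoked in the proof of Theorem \ref{thm_main}.
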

	\begin{proof}
		This results follows because $|x|_\cliperror = |x - y + y|_\cliperror \le | x - y|_\cliperror + |y|_\cliperror$,
		still holds under clipping. To see why, consider the following. If either $| x - y|_\cliperror $ or $|y|_\cliperror$ are clipped to $\cliperror$,
		then clearly the sum is larger than $|x|_\cliperror$. Otherwise, if only $|x|_\cliperror$ is clipped to $\cliperror$,
		then it can only have been strictly decreased and again the inequality must hold. Once we have this inequality,
		we can use the fact that $|x| \le | x - y| + |y|$ and $|y| \le | x - y| + |x|$ to get the $|x| - |y| \le |x-y|$ and $|y| - |x| \le |x-y|$.
	\end{proof}
	
	\textbf{Lemma \ref{lemma_ebstop}.}
	For a state $s \in \States$, Algorithm 2 returns an $\errebstop,\delta,\relerror$-approximation $\vsampled(s)$:
	\begin{equation}
	| \vsampledtrue(s) -  \vsampled(s) | \le \epsilon | \vsampledtrue(s) | + \epsilon \relerror  \tag{\ref{eq_asstwo}}
	\end{equation}
	\begin{proof}
		We follow a similar argument to  \citet[Section 3.1]{mnih2008empirical}. The empirical Bernstein bound \citep{audibert2007tuning} states that,
		for a sample average $\gbar_t = \frac{1}{\ntrajectories} \sum_{j=1}^\ntrajectories g_j$ of $t$ unbiased samples $g_j$ 
		\begin{equation*}
		| \vsampledtrue(s) -  \gbar_t | \le c_\ntrajectories 
		\end{equation*}
		where
		\begin{align}
		c_\ntrajectories &= \stda  \sqrt{\frac{2 \log(3/\delta)}{\ntrajectories}} +  3  \log(3/\delta)\frac{\vmax}{\ntrajectories} \\
		\stda &= \sqrt{\frac{1}{\ntrajectories} \sum_{j= 1}^\ntrajectories (g_{j} - \vsampled(s) )^2} 
		\end{align}
		Algorithm 2 estimates lower and upper bounds, based on this concentration inequality, guaranteeing that the absolute value of the true value is between these bounds with probability at least $1-\delta$. 
		Algorithm 2 terminates when either of the following cases are satisfied:
		
		\textbf{[Case : 1]}
		$(1+\epsilon)\lb + 2\epsilon \relerror \geq (1-\epsilon)\ub$ and returns $\vsampled = \frac{\sign{\gbar_t}}{2} ((1+\epsilon)\lb + (1-\epsilon)\ub)$. 
		
		\textbf{[Case : 2]}
		$\frac{\ubtrue - \lbtrue}{2} \leq \epsilon \relerror$ and if so, the algorithm outputs $\vsampled = \frac{\ubtrue + \lbtrue}{2}$. This second case is for the setting where $\vsampledtrue(s) = 0$, or very near zero, meaning it would not terminate in Case 1. The relative error will remain high, even though $\vsampled(s)$ is sufficiently close to $\vsampledtrue(s)$ to satisfy \eqref{eq_asstwo} because of $\tau > 0$. 
		
		We show that for both cases, \eqref{eq_asstwo} is satisfied. 
		We begin with the proof for Case 1. Assume the algorithm terminated, according to the condition in Case 1. For all $j \in \{1, \ldots, \ntrajectories\}$, $c_j > 0$
		and $\ub > 0$ since $\ub = \min_j(|\gbar_j| + c_j)$.
		Upon termination, we have with probability $1-\delta$,
		\begin{align*}
		|\vsampled(s)| 
		&= \frac{(1+\epsilon)\lb + (1-\epsilon)\ub}{2} \\ 
		&\leq \frac{(1+\epsilon)\lb + (1+\epsilon)\lb + 2\epsilon \relerror}{2} \\ 
		&= (1+\epsilon)\lb + \epsilon \relerror \\ 
		&\leq (1+\epsilon) | \vsampledtrue(s) | + \epsilon \relerror.
		\end{align*} 
		Similarly, 
		\begin{align*}
		|\vsampled (s) | 
		&= \frac{(1+\epsilon)\lb + (1-\epsilon)\ub}{2}\\
		&\geq \frac{(1-\epsilon)\ub + (1-\epsilon)\ub + 2\epsilon \relerror}{2} \\
		&\geq (1-\epsilon) | \vsampledtrue(s) | + \epsilon \relerror
		\end{align*}
		Combining these two inequalities gives 
		\begin{equation}
		\big||\vsampled(s)| - |\vsampledtrue(s)| \big| \leq \epsilon |\vsampledtrue(s)| + \epsilon \relerror \label{eq_abs}
		.
		\end{equation}
		%
		When termination occurs under Case, we know $\lb > 0$, and so 
		$|\gbar_t| \ge c_t \geq |\gbar_t - \vsampledtrue(s)|$. This is because $|\gbar_t| - c_t$ must have increased the lower bound, to allow termination.  This inequality, $|\gbar_t| \geq |\gbar_t - \vsampledtrue(s)|$ is only possible if $\vsampledtrue(s)$ is of the same sign as $\gbar_t$. This gives that $\sign{\vsampled(s)}$ = $\sign{\gbar_t}$ = $\sign{\vsampledtrue(s)}$.
		Because the signs match,  $\big||\vsampled(s)| - |\vsampledtrue(s)| \big|  = |\vsampled(s) - \vsampledtrue(s)| $,
		and so the result follows from Equation \eqref{eq_abs}.
		
		For Case 2, the interval $[\lbtrue, \ubtrue]$ represents the confidence interval from the IID samples that contains the true mean $\vsampledtrue$. The terminating condition is $\frac{\ubtrue - \lbtrue}{2} \leq \epsilon \relerror$. For $\vsampled(s) = \frac{\ubtrue + \lbtrue}{2}$, this gives $\ub - \vsampled(s) = \frac{\ubtrue - \lbtrue}{2} \leq \epsilon \relerror$ and 
		$\vsampled(s) - \lb = \frac{\ubtrue - \lbtrue}{2} \leq \epsilon \relerror$. Upon termination, therefore, we have
		$\epsilon \relerror \geq \ubtrue - \vsampled \geq \vsampledtrue - \vsampled$ and $\epsilon \relerror \geq \vsampled - \lbtrue \geq \vsampled - \vsampledtrue$. Thus, $|\vsampled - \vsampledtrue| \leq \epsilon \relerror \leq \epsilon | \vsampledtrue | + \epsilon \relerror$.
%
		%
%
	\end{proof}

	\section{HIGH CONFIDENCE BOUNDS FOR CLIPPED MAVE AND MSVE} \label{app_clipped_loss}
	
	If one desires to use non-percentage losses, corresponding high-confidence sample complexity bounds are derivable. 
	In this section, we will extend our analysis to the clipped Mean Absolue Value Error (CMAVE) and clipped Mean Squared Value Error (CMSVE). 
	
	These are defined as follows:
	\begin{align*}
	\text{CMAVE}(\vapprox, \vsampled) &\defeq \E \left[  \min(c,|\vapprox(s_i) - \vsampled(s_i)|) \right] \\
	\text{CMSVE}(\vapprox, \vsampled) &\defeq \E \left[ \min(c,(\vapprox(s_i) - \vsampled(s_i))^2) \right]
	\end{align*} 
	Along with their empirical approximations:
	\begin{align*}
	\text{CMAVE}(\vapprox, \vsampled) &\approx \frac{1}{\nsampledstates} \sum_{i=1}^{m}  \min(c,|\vapprox(s_i) - \vsampled(s_i)|) \\
	\text{CMSVE}(\vapprox, \vsampled) &\approx \frac{1}{\nsampledstates} \sum_{i=1}^{m} \min(c,(\vapprox(s_i) - \vsampled(s_i))^2)
	\end{align*} 
	In proving the sample complexity results, we use some of the ideas used in proving Theorem \ref{thm_main}.
	Since both CMAVE and CMSVE are non-percentage losses and do not require a division by the value function, the analysis is greatly simplified. In fact, they no longer require Assumption \ref{ass_two} and, hence, remove the need for EBGStop-like algorithms (1 and 2) presented in Section 4 (which deal with relative errors). 
	Instead of using EBGStop to provide an estimate of the value function, we can simply compute the appropriate number of truncated rollouts (sampled returns) to achieve an estimate of the desired accuracy. These sample complexity numbers are provided in the following analysis.
	

	
	
	\subsection{SAMPLE COMPLEXITY ANALYSIS OF CLIPPED MAVE}
    In this section, we will use $\CMAVEtrue(\vapprox, \vsampled)$ to refer CMAVE($\vapprox, \vsampled$). Also, the following definitions will be necessary for our analysis:
	\begin{align*}
	\CMAVE(\vapprox(s_i), \vsampled(s_i)) &\defeq  \min(c,|\vapprox(s_i) - \vsampled(s_i)|) \\
	\CMAVEapprox(\vapprox, \vsampled) &\defeq \frac{1}{\nsampledstates} \sum_{i=1}^{m} \min(c,|\vapprox(s_i) - \vsampled(s_i)|) \\ 
	\CMAVEtrue(\vapprox, \vsampled) &\defeq \E[\CMAVEapprox(\vapprox, \vsampled)] \\
	\end{align*}
	We also define similar quantities replacing $\vsampled$ with $\vtrue$ in the above definitions. Below, we present the sample complexity bound for CMAVE.
	
	\newcommand{\std}{\bar{\sigma}}
	
\begin{theorem} \label{thm_CMAVE}
	Let $\{ s_1, \ldots, s_\nsampledstates \}$ be states sampled I.I.D according to $\weighting$ and that the number of rollouts for each state be $n$. Let $\std_i$ be the standard deviation of the rollouts for state $i$.
	
	With probability at least $1-\delta$ the following bound for clipped MAVE holds:
	\begin{equation}
	\left| \CMAVEtrue(\vapprox,\vtrue)  -  \CMAVEapprox(\vapprox,\vsampled) \right| \le \sqrt{\frac{\log(4 \nexperiments/\delta) \cliperror^2}{2\nsampledstates}} + \zeta.
	\end{equation}
	for $\zeta = 3 \rmax \left( \cfrac{1-\gamma^{\lenrollout}}{1-\gamma} \right) \cfrac{\log(6 \nsampledstates/ \delta)}{n} + \frac{\sum_{i=1}^{m}\std_i}{m} \sqrt{\cfrac{2\log(6 \nsampledstates/ \delta)}{n}} + \rmax \cfrac{\gamma^{\lenrollout}}{1-\gamma}$. 
\end{theorem}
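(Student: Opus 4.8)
The plan is to mirror the decomposition used in Theorem \ref{thm_main}, but to take advantage of the fact that a non-percentage loss involves no division by the value function: Assumption \ref{ass_two} and the normalizer argument of Lemma \ref{lem_three} vanish entirely, and only the state-sampling error and the rollout error remain. First I would insert the intermediate quantity $\CMAVEapprox(\vapprox,\vtrue)$, the empirical loss on the sampled states but measured against the true values, and apply the triangle inequality:
\begin{align*}
\left| \CMAVEtrue(\vapprox,\vtrue) - \CMAVEapprox(\vapprox,\vsampled) \right|
&\le \left| \CMAVEtrue(\vapprox,\vtrue) - \CMAVEapprox(\vapprox,\vtrue) \right| \\
&\quad + \left| \CMAVEapprox(\vapprox,\vtrue) - \CMAVEapprox(\vapprox,\vsampled) \right|.
\end{align*}
The first term is the finite-states error. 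Since each summand $\min(\cliperror,|\vapprox(s_i)-\vtrue(s_i)|)$ lies in $[0,\cliperror]$ and $\CMAVEapprox(\vapprox,\vtrue)$ is unbiased for $\CMAVEtrue(\vapprox,\vtrue)$, a two-sided Hoeffding bound with a union bound over the $\nexperiments$ queries controls it by $\sqrt{\log(4\nexperiments/\delta)\cliperror^2/(2\nsampledstates)}$, exactly as in Lemma \ref{lem_one}, with failure probability at most $\delta/2$. This is the first term in the claimed bound.

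For the second term I would push the difference inside the average and apply the triangle inequality for the clipped loss (Lemma \ref{lem_triangle}) state by state, then drop the clip:
\begin{align*}
\left| \CMAVEapprox(\vapprox,\vtrue) - \CMAVEapprox(\vapprox,\vsampled) \right|
&\le \frac{1}{\nsampledstates}\sum_{i=1}^{\nsampledstates} \min\!\left(\cliperror,\, |\vtrue(s_i) - \vsampled(s_i)|\right) \\
&\le \frac{1}{\nsampledstates}\sum_{i=1}^{\nsampledstates} |\vtrue(s_i) - \vsampled(s_i)|.
\end{align*}
It then suffices to bound each per-state estimation error $|\vtrue(s_i)-\vsampled(s_i)|$, which I would split once more into the truncation bias $|\vtrue(s_i)-\vsampledtrue(s_i)|$ and the Monte Carlo error $|\vsampledtrue(s_i)-\vsampled(s_i)|$. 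The truncation bias is handled exactly as in Lemma \ref{lem_discounted}, giving the state-independent bound $\rmax\,\gamma^{\lenrollout}/(1-\gamma)$, which is the last term of $\zeta$.

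The Monte Carlo error is where the empirical Bernstein bound \citep{audibert2007tuning} enters, in the same form used in the proof of Lemma \ref{lemma_ebstop}: because $\vsampled(s_i)$ is the mean of $n$ i.i.d.\ truncated returns with mean $\vsampledtrue(s_i)$, empirical standard deviation $\std_i$, and magnitude bounded by $\rmax(1-\gamma^{\lenrollout})/(1-\gamma)$, with probability at least $1-\delta'$ we get $|\vsampledtrue(s_i)-\vsampled(s_i)| \le \std_i\sqrt{2\log(3/\delta')/n} + 3\rmax\frac{1-\gamma^{\lenrollout}}{1-\gamma}\log(3/\delta')/n$. Choosing $\delta'=\delta/(2\nsampledstates)$ and taking a union bound over the $\nsampledstates$ states turns $\log(3/\delta')$ into $\log(6\nsampledstates/\delta)$ and spends total failure probability $\delta/2$; averaging the two data-dependent terms over the states then yields precisely the first two terms of $\zeta$. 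A final union bound combines the $\delta/2$ from the Hoeffding step with the $\delta/2$ from the Bernstein step to give the overall $1-\delta$ guarantee.

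The only genuinely delicate point is the bookkeeping of failure probabilities and the resulting logarithmic factors: one must allocate $\delta/2$ to the state-sampling step and $\delta/2$ to the rollout step, and within the latter split $\delta/2$ across the $\nsampledstates$ states so that the per-state confidence parameter is $\delta/(2\nsampledstates)$, which is exactly what produces the $\log(6\nsampledstates/\delta)$ appearing in $\zeta$ alongside the $\log(4\nexperiments/\delta)$ of the Hoeffding term. Everything else is a pointwise application of the triangle inequality together with the truncation bound already established, and the clipping is used only benignly, to upper bound $\min(\cliperror,|\cdot|)$ by $|\cdot|$.
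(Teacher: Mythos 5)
Your proposal is correct and follows essentially the same route as the paper's own proof: the same triangle-inequality decomposition through $\CMAVEapprox(\vapprox,\vtrue)$, the same Hoeffding bound with a union over the $\nexperiments$ queries spending $\delta/2$, the same per-state split of $|\vtrue(s_i)-\vsampled(s_i)|$ into an empirical Bernstein Monte Carlo term (with per-state confidence $\delta/(2\nsampledstates)$ producing $\log(6\nsampledstates/\delta)$) and the truncation bias $\rmax\gamma^{\lenrollout}/(1-\gamma)$ from Lemma \ref{lem_discounted}. Your only cosmetic deviation is invoking the clipped triangle inequality of Lemma \ref{lem_triangle} to reduce the per-state loss difference to $\min\left(\cliperror, |\vtrue(s_i)-\vsampled(s_i)|\right)$, where the paper writes out an equivalent bound via a maximum over shifted clipped terms; both yield the same quantity and the same final union-bound bookkeeping.
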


	\begin{proof}
		Similar to Theorem \ref{thm_main}, we start by bounding $\left| \CMAVEtrue(\vapprox,\vtrue)   - \CMAVEapprox(\vapprox, \vsampled)  \right|$: 
		\begin{align*}
		\left| \CMAVEtrue(\vapprox,\vtrue)   - \CMAVEapprox(\vapprox, \vsampled)  \right|
		&\leq 
		\left| \CMAVEtrue(\vapprox,\vtrue)   - \CMAVEapprox(\vapprox, \vtrue)  \right| \\
		&+ \left| \CMAVEapprox(\vapprox,\vtrue)   - \CMAVEapprox(\vapprox, \vsampled)  \right|
		\end{align*}
		The first term is bounded by Hoeffding's inequality in Lemma \ref{lem_one} with probability at least $1-\delta/2$ which gives:
		\begin{align*}
		\left| \CMAVEtrue(\vapprox,\vtrue)   - \CMAVEapprox(\vapprox, \vtrue)  \right| \leq \sqrt{\frac{\log(4 \nexperiments/\delta) \cliperror^2}{2\nsampledstates}}
		\end{align*}
		
		The second term is bounded in the following way:
		\begin{align*}
		&\left| \CMAVEapprox(\vapprox,\vtrue)  - \CMAVEapprox(\vapprox, \vsampled)  \right|\\
		&\leq  \frac{1}{\nsampledstates} \sum_{i = 1}^\nsampledstates \left|  \CMAVE( \vapprox(s_i), \vtrue(s_i) )  - \CMAVE( \vapprox(s_i), \vsampled(s_i) )  \right|
		\end{align*}
		We can bound each one of these terms as follows:
		\begin{multline}
		\left|  \CMAVE( \vapprox(s_i), \vtrue(s_i) )  - \CMAVE( \vapprox(s_i), \vsampled(s_i) )  \right| \\
		= \left| \min(c,|\vapprox(s_i) - \vtrue(s_i)|) - \min(c,|\vapprox(s_i) - \vsampled(s_i)|) \right| \\ 
		\leq \max \bigg( \bigg| \min \big(c,|\vapprox(s_i) - \vsampled(s_i)| + |\vsampled(s_i) - \vtrue(s_i)| \big) - \\ \min \big(c,|\vapprox(s_i) - \vsampled(s_i)| \big) \bigg|, \bigg|
		\min \big(c,|\vapprox(s_i) - \vtrue(s_i)| \big) - \\ \min \big(c,|\vapprox(s_i) - \vtrue(s_i)| + |\vsampled(s_i) - \vtrue(s_i)| \big) \bigg| \bigg) \\
		\leq |\vsampled(s_i) - \vtrue(s_i)| \leq |\vsampled(s_i) - \vsampledtrue(s_i)| + |\vsampledtrue(s_i) - \vtrue(s_i)| \\ \leq \zeta.
		\end{multline}
		
		Now, we need to find an expression of $\zeta$. The term $|\vsampled(s_i) - \vsampledtrue(s_i)|$ can be bounded using the empirical bernstein inequality for random variables with range: $\rmax \left( \cfrac{1-\gamma^{\lenrollout}}{1-\gamma} \right)$. This leads us to a bound: $|\vsampled(s_i) - \vsampledtrue(s_i)| \leq 3 \rmax \left( \cfrac{1-\gamma^{\lenrollout}}{1-\gamma} \right) \cfrac{\log(6 \nsampledstates/ \delta)}{n} + \std_i \sqrt{\cfrac{2\log(6 \nsampledstates/ \delta)}{n}}$. The second term can be bounded based on the proof of Lemma \ref{lem_discounted}: $|\vsampledtrue(s_i) - \vtrue(s_i)| \leq \rmax \cfrac{\gamma^{\lenrollout}}{1-\gamma}$. This gives us $\zeta = 3 \rmax \left( \cfrac{1-\gamma^{\lenrollout}}{1-\gamma} \right) \cfrac{\log(6 \nsampledstates/ \delta)}{n} + \frac{\sum_{i=1}^{m}\std_i}{m} \sqrt{\cfrac{2\log(6 \nsampledstates/ \delta)}{n}} + \rmax \cfrac{\gamma^{\lenrollout}}{1-\gamma}$. We finish the proof by pointing out that due to using hoeffding bound twice with error probability of atmost $\delta/2$ and due to the union bound (to ensure that the bound holds for all $m$ states), the probability that the final bound holds is with at least $1- \delta$.
	\end{proof}
	
	\subsection{SAMPLE COMPLEXITY ANALYSIS OF CLIPPED MSVE}
	
	In this section, we will use $\CMSVEtrue(\vapprox, \vsampled)$ to refer CMSVE($\vapprox, \vsampled$). Also, the following definitions will be necessary for our analysis:
	\begin{align*}
	\CMSVE(\vapprox(s_i), \vsampled(s_i)) &\defeq \min(c,(\vapprox(s_i) - \vsampled(s_i))^2) \\
	\CMSVEapprox(\vapprox, \vsampled) &\defeq \frac{1}{\nsampledstates} \sum_{i=1}^{m} \min(c,(\vapprox(s_i) - \vsampled(s_i))^2) \\ 
	\CMSVEtrue(\vapprox, \vsampled) &\defeq \E[\CMSVEapprox(\vapprox, \vsampled)] \\
	\end{align*}
	Similarly, in the above definitions, $\vtrue$ can be used instead of $\vsampled$. Below, we present the sample complexity bound for CMSVE.
	\begin{theorem} \label{thm_CMSVE}
		Let $\{ s_1, \ldots, s_\nsampledstates \}$ be states sampled I.I.D according to $\weighting$ and that the number of rollouts for each state be $n$. Let $\std_i$ be the standard deviation of the rollouts for state $i$.
		
		With probability at least $1-\delta$ the following bound for clipped MSVE holds:
		\begin{equation}
		\left| \CMSVEtrue(\vapprox,\vtrue)  -  \CMSVEapprox(\vapprox,\vsampled) \right| \le \sqrt{\frac{\log(4 \nexperiments/\delta) \cliperror^2}{2\nsampledstates}} + \zeta
		\end{equation}
		for $\zeta = 3 \rmax^2 \left( \cfrac{1-\gamma^{\lenrollout}}{1-\gamma} \right)^2 \cfrac{\log(6 \nsampledstates/ \delta)}{n} + \frac{\sum_{i=1}^{m}\std_i}{m} \sqrt{\cfrac{2\log(6 \nsampledstates/ \delta)}{n}} + \rmax^2 \left( \cfrac{\gamma^{\lenrollout}}{1-\gamma} \right)^2$
		
	\end{theorem}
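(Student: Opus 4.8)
The plan is to reuse the skeleton of the proof of Theorem~\ref{thm_CMAVE} almost verbatim, since CMSVE differs from CMAVE only by squaring the deviation inside the clip $\min(\cliperror,\cdot)$. First I would insert the intermediate estimator $\CMSVEapprox(\vapprox,\vtrue)$ and apply the triangle inequality,
\[
\left| \CMSVEtrue(\vapprox,\vtrue) - \CMSVEapprox(\vapprox,\vsampled) \right|
\le \left| \CMSVEtrue(\vapprox,\vtrue) - \CMSVEapprox(\vapprox,\vtrue) \right|
+ \left| \CMSVEapprox(\vapprox,\vtrue) - \CMSVEapprox(\vapprox,\vsampled) \right| .
\]
The first summand is the deviation of an empirical average of i.i.d.\ terms lying in $[0,\cliperror]$ from its mean, so Hoeffding's inequality together with a union bound over the $\nexperiments$ queries (exactly as in Lemma~\ref{lem_one}, spending failure probability $\delta/2$) supplies the $\sqrt{\log(4\nexperiments/\delta)\cliperror^2/(2\nsampledstates)}$ term unchanged. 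This part carries over from CMAVE with no modification.

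The second summand I would bound by $\frac{1}{\nsampledstates}\sum_{i=1}^{\nsampledstates}\big|\CMSVE(\vapprox(s_i),\vtrue(s_i)) - \CMSVE(\vapprox(s_i),\vsampled(s_i))\big|$ and then bound each summand individually, which is where the argument must depart from the absolute-value case. Since $\min(\cliperror,\cdot)$ is $1$-Lipschitz and nondecreasing, each term is at most $\big|(\vapprox(s_i)-\vtrue(s_i))^2 - (\vapprox(s_i)-\vsampled(s_i))^2\big|$, which factors as $|\vsampled(s_i)-\vtrue(s_i)|\cdot|2\vapprox(s_i)-\vtrue(s_i)-\vsampled(s_i)|$. \emph{This is the step I expect to be the main obstacle}: unlike the CMAVE analysis, where the Lipschitz-plus-triangle argument collapsed the per-state difference cleanly to $|\vsampled(s_i)-\vtrue(s_i)|$ (cf.\ Lemma~\ref{lem_triangle}), the squared loss is not globally Lipschitz and leaves the extra factor $|2\vapprox(s_i)-\vtrue(s_i)-\vsampled(s_i)|$. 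I would control it by noting the difference vanishes wherever both squared errors exceed $\cliperror$, and that on the unclipped region the relevant magnitudes are governed by the range of the squared truncated returns, $\rmax^2\big(\frac{1-\gamma^{\lenrollout}}{1-\gamma}\big)^2$; this is precisely where the squared range factors appearing in $\zeta$ originate.

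Finally, having reduced each per-state summand to the value-estimate error $|\vsampled(s_i)-\vtrue(s_i)|$ times a bounded factor, I would split this error as in Lemma~\ref{lem_two} into the sampling error $|\vsampled(s_i)-\vsampledtrue(s_i)|$ and the truncation bias $|\vsampledtrue(s_i)-\vtrue(s_i)|$. The truncation bias is bounded by $\rmax\gamma^{\lenrollout}/(1-\gamma)$ through the argument of Lemma~\ref{lem_discounted}, and the sampling error over the $n$ rollouts by the empirical Bernstein inequality, which contributes a variance term in $\bar{\sigma}_i\sqrt{2\log(6\nsampledstates/\delta)/n}$ and a range term in $\log(6\nsampledstates/\delta)/n$. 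Averaging the variance term over the $\nsampledstates$ states yields $\frac{1}{\nsampledstates}\sum_i\bar{\sigma}_i$, and carrying through the squared range factors identified in the per-state step produces the $\rmax^2(\cdot)^2$ coefficients of $\zeta$. A union bound over the $\nsampledstates$ states, combined with the $\delta/2$ already spent on Hoeffding, then closes the argument at confidence $1-\delta$, and collecting the three contributions assembles $\zeta$.
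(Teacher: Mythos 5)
Your first half matches the paper exactly: the same triangle-inequality split through $\CMSVEapprox(\vapprox,\vtrue)$, with Hoeffding plus the union bound over the $\nexperiments$ queries (Lemma \ref{lem_one}) supplying $\sqrt{\log(4\nexperiments/\delta)\cliperror^2/(2\nsampledstates)}$ at failure probability $\delta/2$. The gap is in the per-state step, and it is exactly where you flagged the obstacle: you never actually dispose of the factor $|2\vapprox(s_i)-\vtrue(s_i)-\vsampled(s_i)|$, and neither of your two suggestions does the job. On the unclipped region the relevant magnitudes are governed by $\sqrt{\cliperror}$ (there $|\vapprox(s_i)-\vtrue(s_i)|\le\sqrt{\cliperror}$ or $|\vapprox(s_i)-\vsampled(s_i)|\le\sqrt{\cliperror}$), \emph{not} by the range of squared truncated returns: $\vapprox$ is an arbitrary learned function with no a priori relation to $\rmax^2\bigl(\tfrac{1-\gamma^{\lenrollout}}{1-\gamma}\bigr)^2$. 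Carried out cleanly (e.g., via $\min(\cliperror,x^2)=\min(\sqrt{\cliperror},x)^2$ for $x\ge 0$), your route yields a per-state bound of order $2\sqrt{\cliperror}\,|\vsampled(s_i)-\vtrue(s_i)|$, which is \emph{linear} in the value-estimation error. Splitting as in Lemma \ref{lem_two} and applying empirical Bernstein plus Lemma \ref{lem_discounted} then gives a $\zeta$ of the form $2\sqrt{\cliperror}\bigl(3\rmax\tfrac{1-\gamma^{\lenrollout}}{1-\gamma}\tfrac{\log(6\nsampledstates/\delta)}{n}+\bar{\sigma}_i\sqrt{2\log(6\nsampledstates/\delta)/n}+\rmax\tfrac{\gamma^{\lenrollout}}{1-\gamma}\bigr)$: the variance term acquires a $2\sqrt{\cliperror}$ coefficient and the bias terms remain first order in the range. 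This is structurally different from the stated $\zeta$, so your closing claim that "carrying through the squared range factors produces the $\rmax^2(\cdot)^2$ coefficients" is unsubstantiated---on your route those squared coefficients never arise, and the proposal as written does not prove the theorem as stated.

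The paper takes a different per-state route that keeps the error squared throughout: using monotonicity of $\min(\cliperror,\cdot)$ it bounds the clipped difference by a maximum of two one-sided differences, each at most $|\vsampled(s_i)-\vtrue(s_i)|^2$, then splits this as $|\vsampled(s_i)-\vsampledtrue(s_i)|^2+|\vsampledtrue(s_i)-\vtrue(s_i)|^2$ and invokes the empirical Bernstein inequality with the \emph{squared} range $\rmax^2\bigl(\tfrac{1-\gamma^{\lenrollout}}{1-\gamma}\bigr)^2$ together with the squared truncation bound $\rmax^2\bigl(\tfrac{\gamma^{\lenrollout}}{1-\gamma}\bigr)^2$ from Lemma \ref{lem_discounted}; that is the actual origin of the squared coefficients in $\zeta$, with $\bar{\sigma}_i$ entering at first order as the rollout standard deviation. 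To fix your proof you would either have to adopt this squared-error decomposition, or accept that your Lipschitz factorization proves a genuinely different (and arguably cleaner) bound with the $2\sqrt{\cliperror}$ coefficient---it even avoids the paper's reliance on $|a-b|^2\le|a-c|^2+|c-b|^2$, which fails in general because of the cross term---but in that case the statement of the theorem would have to change to match.
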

	
	\begin{proof}
		Similar to Theorem \ref{thm_main}, we start by bounding $\left| \CMSVEtrue(\vapprox,\vtrue)   - \CMSVEapprox(\vapprox, \vsampled)  \right|$: 
		\begin{align*}
		\left| \CMSVEtrue(\vapprox,\vtrue)   - \CMSVEapprox(\vapprox, \vsampled)  \right|
		&\leq 
		\left| \CMSVEtrue(\vapprox,\vtrue)   - \CMSVEapprox(\vapprox, \vtrue)  \right| \\
		&+ \left| \CMSVEapprox(\vapprox,\vtrue)   - \CMSVEapprox(\vapprox, \vsampled)  \right|
		\end{align*}
		
		The first term is bounded by Hoeffding's inequality in Lemma \ref{lem_one} with probability atleast $1-\delta/2$ which gives:
		\begin{align*}
		\left| \CMAVEtrue(\vapprox,\vtrue)   - \CMAVEapprox(\vapprox, \vtrue)  \right| \leq \sqrt{\frac{\log(4 \nexperiments/\delta) \cliperror^2}{2\nsampledstates}}
		\end{align*}
		
		The second term is bounded in the following way:
		\begin{align*}
		&\left| \CMSVEapprox(\vapprox,\vtrue)  - \CMSVEapprox(\vapprox, \vsampled)  \right|\\
		&\leq  \frac{1}{\nsampledstates} \sum_{i = 1}^\nsampledstates \left|  \CMSVE( \vapprox(s_i), \vtrue(s_i) )  - \CMSVE( \vapprox(s_i), \vsampled(s_i) )  \right|
		\end{align*}
		
		We can bound each one of these terms as follows:
		\begin{multline}
		\left|  \CMSVE( \vapprox(s_i), \vtrue(s_i) )  - \CMSVE( \vapprox(s_i), \vsampled(s_i) )  \right| \\
		= \left| \min(c,(\vapprox(s_i) - \vtrue(s_i))^2) - \min(c,(\vapprox(s_i) - \vsampled(s_i))^2) \right| \\ 
		\leq \max \bigg( \bigg| \min \big(c,|\vapprox(s_i) - \vsampled(s_i)|^2 + |\vsampled(s_i) - \vtrue(s_i)|^2 \big) - \\ \min \big(c,|\vapprox(s_i) - \vsampled(s_i)|^2 \big) \bigg|, \bigg|
		\min \big(c,|\vapprox(s_i) - \vtrue(s_i)|^2 \big) - \\ \min \big(c,|\vapprox(s_i) - \vtrue(s_i)|^2 + |\vsampled(s_i) - \vtrue(s_i)|^2 \big) \bigg| \bigg) \\
		\leq |\vsampled(s_i) - \vsampledtrue(s_i)|^2 + |\vsampledtrue(s_i) - \vtrue(s_i)|^2 \leq \zeta
		\end{multline}
		
		The first inequality is due to $|a - b|^2 \leq (|a-c| + |c-b|)^2$ and this implies $|a - b|^2 \leq |a-c|^2 + |c-b|^2 \leq (|a-c| + |c-b|)^2$. The range of $\vsampled$ is $\rmax^2 \left( \cfrac{1-\gamma^{\lenrollout}}{1-\gamma} \right)^2$. Now, using the empirical bernstein inequality, we can follow the proof technique in Theorem \ref{thm_CMAVE} to show that $\zeta = 3 \rmax^2 \left( \cfrac{1-\gamma^{\lenrollout}}{1-\gamma} \right)^2 \cfrac{\log(6 \nsampledstates/ \delta)}{n} + \frac{\sum_{i=1}^{m}\std_i}{m} \sqrt{\cfrac{2\log(6 \nsampledstates/ \delta)}{n}} + \rmax^2 \left( \cfrac{\gamma^{\lenrollout}}{1-\gamma} \right)^2$. 
		
		We finish the proof similarly to conclude that the final bound holds with probability atleast $1-\delta$ due to the application of hoeffding's bound twice with error probability of atmost $\delta/2$ and due to the union bound.
	\end{proof}

%
%
%
\section{SAMPLE COMPLEXITY ANALYSIS OF UNCLIPPED LOSSES}
Sometimes, one may prefer to consider unclipped losses. Here, we will present sample complexity bounds for the Mean Absolute Value Error (MAVE) and the Mean Squared Value Error (MSVE). 
To derive meaningful bounds for unbounded random variables, we need to impose other assumptions. There are various options but we choose to explore one: using sub-exponential random variables.
With this assumption, the proof techniques mostly follow from those in Section \ref{app_clipped_loss} of the appendix. Below we briefly discuss sub-exponential random variables and illustrate how one can derive the corresponding concentration bounds. 

\subsection{SUB-EXPONENTIAL CONCENTRATION ANALYSIS}
It is well known that for unbounded random variables, finite high probability bounds are not derivable unless it is possible to assume a bound on the moment generating function. One way to derive a meaningful bound is to assume that the tails of the random variable's distribution decay exponentially. 
If we know the tail decay like a Gaussian distribution, sub-gaussianity is a common assumption. A weaker assumption is sub-exponentiality, which only requires that the moment generating function exists. The Laplace and exponential distributions are two such common fat-tailed distributions.
In this section, we will assume that the loss random variable is sub-exponential and derive finite sample complexity bounds. For completeness, we provide the necessary definitions.

\begin{definition} \label{def:subgauss}
	A sub-gaussian random variable $X$ with mean $\mu = \E[X]$ and parameters $\sigma \geq 0$ has the following bound on its moment generating function (MGF):
	\begin{equation}
	\E[e^{\lambda(X-\mu)}] \leq e^{\frac{\sigma^2 \lambda^2}{2}} \hspace{15pt} \forall \lambda \in \RR \label{eq:subgauss}
	\end{equation} 
\end{definition}

\begin{definition} \label{def:subexp}
	A sub-exponential random variable $X$ with mean $\mu = \E[X]$ and parameters $\alpha, \beta \geq 0$ has the following bound on its moment generating function:
	\begin{equation}
	\E[e^{\lambda(X-\mu)}] \leq e^{\frac{\alpha^2 \lambda^2}{2}} \hspace{15pt} \forall |\lambda| \leq \beta \label{eq:subexp}
	\end{equation} 
\end{definition}

Note that all sub-gaussian RVs are sub-exponential with $\alpha = \sigma$ and $\beta = \infty$, but not all sub-exponential RVs are sub-gaussian. For example, the gaussian distribution is a sub-exponential RV with $\alpha$ being the standard deviation and $\beta = \infty$. Thus, if the loss is known to be sub-gaussian, one can still use the sub-exponential concentration bound. Below, in Theorem \ref{thm:subexpconcentration}, we present a concentration bound for sub-exponential random variables.

\begin{theorem} \label{thm:subexpconcentration}
	If $X_i$ are I.I.D sub-exponential RVs with parameters ($\alpha$,$\beta$) as defined in Definition \ref{def:subexp}, then the following concentration bound holds:
	\begin{align*}
	Pr\left[ \left|\frac{1}{n} \sum^{n}_{i=1} X_i - \mu \right| \geq t \right] &\leq 2e^{-\frac{n t^2}{2 \alpha^2}} \hspace{13pt} \text{for} \hspace{5pt} 0 < t < \alpha^2 \beta \\
	Pr \left[ \left|\frac{1}{n}\sum^{n}_{i=1} X_i - \mu \right| \geq t \right] &\leq 2e^{-\frac{n t \beta}{2}} \hspace{13pt} \text{for} \hspace{5pt} t > \alpha^2 \beta 
	\end{align*}
\end{theorem}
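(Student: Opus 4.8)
The plan is to use the standard Chernoff (exponential Markov) method together with the sub-exponential moment generating function bound, and then optimize the free exponential parameter over the admissible range $|\lambda| \le \beta$; it is precisely this constraint on $\lambda$ that forces the two-regime form of the bound. First I would handle one tail at a time, bounding $Pr[\frac{1}{n}\sum_{i} X_i - \mu \ge t]$ and $Pr[\mu - \frac{1}{n}\sum_{i} X_i \ge t]$ separately, and then combine them by a union bound, which produces the leading factor of $2$.

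For the upper tail I would write, for any $0 < \lambda \le \beta$, that $Pr[\sum_{i}(X_i - \mu) \ge nt] \le e^{-\lambda n t}\,\E[e^{\lambda \sum_{i}(X_i-\mu)}]$ by Markov's inequality applied to the exponentiated sum. Using independence to factor the MGF and then applying Definition \ref{def:subexp} to each factor (legitimate because $\lambda \le \beta$) gives $\E[e^{\lambda \sum_{i}(X_i-\mu)}] \le e^{n\alpha^2\lambda^2/2}$, so the upper tail is at most $\exp\!\big(n(-\lambda t + \tfrac{1}{2}\alpha^2\lambda^2)\big)$.

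The crux of the argument is minimizing the exponent $g(\lambda) = -\lambda t + \tfrac{1}{2}\alpha^2\lambda^2$ over the constrained interval $\lambda \in (0,\beta]$. The unconstrained minimizer is $\lambda^\star = t/\alpha^2$, and whether it lies inside the feasible interval is exactly the dichotomy between $t < \alpha^2\beta$ and $t > \alpha^2\beta$ appearing in the statement. When $t < \alpha^2\beta$ I would substitute $\lambda = t/\alpha^2$ to get $g = -t^2/(2\alpha^2)$, yielding the sub-gaussian regime $e^{-nt^2/(2\alpha^2)}$. When $t > \alpha^2\beta$ the minimizer is infeasible, so I would use that $g$ is decreasing on $(0,\beta]$ and take the boundary value at $\lambda = \beta$; substituting and using $t > \alpha^2\beta$ to bound $\tfrac{1}{2}\alpha^2\beta^2 < \tfrac{1}{2}\beta t$ gives $g < -\tfrac{1}{2}\beta t$, producing the linear regime $e^{-nt\beta/2}$.

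The main obstacle to get right is the boundary case: one must verify that $g$ is genuinely decreasing on the whole feasible interval when $\lambda^\star > \beta$, so that $\lambda = \beta$ is the true constrained optimum, and then invoke the strict inequality $t > \alpha^2\beta$ to convert the exact boundary value $-\beta t + \tfrac{1}{2}\alpha^2\beta^2$ into the cleaner stated bound $-\tfrac{1}{2}\beta t$. The lower tail is handled identically after replacing $X_i$ by $-X_i$, noting that the MGF bound in Definition \ref{def:subexp} is symmetric in the sign of $\lambda$, and the two one-sided estimates are merged by the union bound to complete the proof.
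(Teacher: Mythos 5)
Your proposal is correct and takes essentially the same route as the paper's proof: Chernoff--Markov applied to the exponentiated sum, factorization of the MGF by independence, constrained minimization of the exponent $-\lambda t + \tfrac{1}{2}\alpha^2\lambda^2$ over $\lambda \in (0,\beta]$ with the dichotomy at $t = \alpha^2\beta$, the boundary choice $\lambda = \beta$ combined with $t > \alpha^2\beta$ to obtain the linear-regime exponent $-\tfrac{1}{2}\beta t$, and a union bound over the two tails. If anything, your bookkeeping of the $1/n$ normalization (working with $\sum_i (X_i-\mu) \ge nt$ directly) is a bit cleaner than the paper's per-factor version, but the argument is the same.
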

\begin{proof}
	For $\lambda \geq 0$, 
	\begin{align*}
	Pr \left[\frac{\sum^{n}_{i=1} X_i}{n}-\mu \geq t \right] &= Pr \left[e^{\lambda \left( \frac{\sum^{n}_{i=1} X_i}{n} - \mu \right)} \geq e^{\lambda t} \right] \\ &\leq \cfrac{E\left[e^{\lambda \left( \frac{\sum^{n}_{i=1} X_i}{n} - \mu \right)} \right]}{e^{\lambda t}} \\ &= \cfrac{E\left[e^{\lambda \left( \frac{\sum^{n}_{i=1} (X_i - \mu)}{n} \right)} \right]}{\prod_{i=1}^{n} e^{\frac{\lambda t}{n}}} \\ &= \prod_{i=1}^{n} \left( \cfrac{E\left[e^{\lambda \left( \frac{(X_i - \mu)}{n} \right)} \right]}{e^{\frac{\lambda t}{n}}} \right). 
	\end{align*}
	We used Markov's inequality and the last equality is due to the independence of $X_i$. Now, we can bound the moment generating function of each $X_i$ using the sub-exponential RV's property (Definition \ref{def:subexp}). 
	
	$\therefore \cfrac{E[e^{\lambda(X_i - \mu)/n}]}{e^{\lambda t/n}} \leq e^{\frac{0.5\alpha^2 \lambda^2 - \lambda t}{n}}$
	
	Optimizing over $\lambda$ will result in the tightest bound possible. The minimum of $\frac{0.5 \alpha^2 \lambda^2 - \lambda t}{n}$ is reached at $\lambda = \frac{t}{\alpha^2}$. Replacing $\lambda$, we arrive at the expression: $-\frac{t^2}{2\alpha^2}$. By definition $\lambda < \beta$, which results in $t < \alpha^2 \beta$. As a result, the following bound for $t \in (0,\alpha^2 \beta)$ has to hold:
	
	$\cfrac{E[e^{\lambda(X_i - \mu)/n}]}{e^{\lambda t/n}} \leq e^{-\frac{t^2}{2 \alpha^2}}$
	
	$\therefore \prod_{i=1}^{n} \left( \cfrac{E\left[e^{\lambda \left( \frac{(X_i - \mu)}{n} \right)} \right]}{e^{\frac{\lambda t}{n}}} \right) \leq \prod_{i=1}^{n} e^{-\frac{t^2}{2 \alpha^2}} = e^{-\frac{nt^2}{2 \alpha^2}}$
	
	The same argument follows for the lower tail and by the union bound we conclude that $Pr \left[ \left|\frac{\sum^{n}_{i=1} X_i}{n}-\mu \right| \geq t \right] \leq 2e^{-\frac{nt^2}{2 \alpha^2}}$. For $t \geq \alpha^2 \beta$, the function $0.5 \alpha^2 \lambda^2 - \lambda t$ decreases monotonously as $\lambda$ increases since the gradient: $\lambda \alpha^2 - t$ is negative for $0 \leq \lambda < \beta, t \geq \alpha^2 \beta$ and thus the minimum is reached at $\lambda = \beta$. So, $0.5 \alpha^2 \lambda^2 - \lambda t < -\beta t + \frac{\beta^2 \alpha^2}{2} \leq -\beta t + \frac{\beta t}{2} = -\frac{\beta t}{2}$. The last inequality is due to $t \geq \alpha^2 \beta$. For $t > \alpha^2 \beta$, the strict inequality becomes an inequality, resulting in
	$\cfrac{E[e^{\lambda(X_i - \mu)/n}]}{e^{\lambda t/n}} \leq e^{-\frac{t\beta}{2}}$. Using the same argument for the confidence bound for the case that $t < \alpha^2 \beta$, we conclude the proof. \\
\end{proof}

A key point to notice in Theorem \ref{thm:subexpconcentration} is that sub-exponential variables exhibit gaussian-like tail decay for a small deviation $t$ in contrast to a slower fat tailed decay for larger $t$. 
Also, note that a given distribution may be sub-exponential with multiple settings of $\alpha$ and $\beta$.  To obtain the best concentration bounds, we would want to optimize these parameters, a task which will depend on the exact distribution being considered.

To give an example of how one can prove sub-exponentiality of random variables, we analyze the Laplace distribution. 

\begin{definition} \label{def:laplaceMGF}
	(Laplace MGF) If $X \sim $ Lap($\mu$, b) with probability density function $= \frac{1}{2b} e^{\frac{|x-\mu|}{b}}$, then $\E[e^{\lambda(X-\mu)}] = \frac{1}{1-b^2 \lambda^2}$ for $|\lambda| < \frac{1}{b}$ 
\end{definition}

\begin{proposition} \label{proposition:laplacesubexp}
	If $X \sim$ Lap($\mu$,b), then $X$ is a sub-exponential RV with $\alpha = b\sqrt{5.12}$ and $\beta = \frac{\sqrt{0.9}}{b}$. 
\end{proposition}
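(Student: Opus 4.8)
The plan is to reduce the required MGF inequality to a single-variable numerical inequality and then dispatch it via concavity. By Definition \ref{def:laplaceMGF}, for $X \sim \mathrm{Lap}(\mu,b)$ we have $\E[e^{\lambda(X-\mu)}] = \frac{1}{1-b^2\lambda^2}$ whenever $|\lambda| < 1/b$. To verify the sub-exponential condition \eqref{eq:subexp} with $\alpha = b\sqrt{5.12}$ and $\beta = \sqrt{0.9}/b$, I must show that for all $|\lambda| \le \beta$,
\begin{equation*}
\frac{1}{1-b^2\lambda^2} \le e^{\frac{\alpha^2\lambda^2}{2}} = e^{2.56\, b^2\lambda^2}.
\end{equation*}
Note first that $\beta = \sqrt{0.9}/b < 1/b$, so the closed-form MGF is valid on the whole range $|\lambda|\le\beta$. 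Since both sides depend on $\lambda$ only through $b^2\lambda^2$, I would substitute $u = b^2\lambda^2$; the constraint $|\lambda|\le\beta$ becomes exactly $u \in [0,0.9]$, and both signs of $\lambda$ are handled simultaneously. The target reduces to showing $\frac{1}{1-u} \le e^{2.56\,u}$ on $[0,0.9]$.

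Next I would take logarithms (both sides are positive for $u<1$) and define $f(u) \defeq 2.56\,u + \log(1-u)$, so that the inequality is equivalent to $f(u)\ge 0$ on $[0,0.9]$. The key structural observation is that $f''(u) = -\frac{1}{(1-u)^2} < 0$, so $f$ is strictly concave on $[0,0.9]$. A concave function on a closed interval attains its minimum at one of the endpoints, so it suffices to check $f(0)$ and $f(0.9)$.

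Finally I would evaluate the endpoints: $f(0) = 0 \ge 0$, and $f(0.9) = 2.56\cdot 0.9 + \log(0.1) = 2.304 - \log(10) \approx 2.304 - 2.302585 > 0$. Hence $f\ge 0$ throughout $[0,0.9]$, which yields the desired MGF bound and completes the proof. The main obstacle I anticipate is the numerics at the right endpoint: $f(0.9)\approx 0.0014$ is only barely positive, reflecting that the constants $\alpha=b\sqrt{5.12}$ and $\beta=\sqrt{0.9}/b$ are tuned so the bound is nearly tight at $u=0.9$. This means the endpoint evaluation must be carried out carefully (rather than crudely bounded), but the concavity argument is exactly what makes this single numerical check sufficient, so no delicate estimation over the interior is needed.
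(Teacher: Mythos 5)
Your proof is correct and takes essentially the same route as the paper: both reduce the sub-exponential MGF condition, via the substitution $u = b^2\lambda^2 \in [0,0.9]$, to the scalar inequality $\frac{1}{1-u} \le e^{2.56u}$ on $[0,0.9]$. The only difference is that the paper merely asserts this inequality by appeal to ``basic calculations,'' whereas your argument---strict concavity of $f(u) = 2.56u + \log(1-u)$ forcing the minimum to an endpoint, then checking $f(0)=0$ and $f(0.9) = 2.304 - \log 10 > 0$---actually establishes it rigorously, correctly flagging that the right endpoint is nearly tight and must be evaluated with care.
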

\begin{proof}
	Notice that $\frac{1}{1-x} \leq e^{2.56x}$ for $0 \leq x \leq 0.9$. The second inequality comes from basic calculations that conclude $e^{2.55x} \approx \frac{1}{1-x}$ for $x=0.9$, $e^{2.56x} >  \frac{1}{1-x}$, and the fact that $e^{2.56x}$ is always above the function $\frac{1}{1-x}$ for $0 \leq x \leq 0.9$. Based on the above inequality, for $X \sim$ Lap($\mu$,b), $\E[e^{\lambda(X-\mu)}] = \frac{1}{1-b^2 \lambda^2} \leq e^{2.56 b^2 \lambda^2} = e^{\frac{(\sqrt{5.12}b)^2 \lambda^2}{2}}$. Thus, $\E[e^{\lambda(X-\mu)}] \leq e^{\frac{\alpha^2 \lambda^2}{2}} $ for $\alpha = b\sqrt{5.12}$ and $|\lambda| < \beta = \frac{\sqrt{0.9}}{b}$. This concludes the proof.  
\end{proof}

Note that these constants for $\alpha$ and $\beta$ were not optimized in the above proof and the given values are only one parameter setting out of (infinitely) many that show that the Laplace distribution is sub-exponential.


\subsection{SAMPLE COMPLEXITY ANALYSIS OF UNCLIPPED MAVE}

In this section we assume that the loss is a sub-exponential random variable. Following the proof of Theorem \ref{thm_CMAVE}, it is not hard to notice that the only difference will be replacing the Hoeffding's confidence bound with the sub-exponential concentration bound. The following corollary states the result. 

\begin{corollary} \label{corollary_MAVE}
	Let $\{ s_1, \ldots, s_\nsampledstates \}$ be states sampled I.I.D according to $\weighting$ and that the number of rollouts for each state be $n$. 
	
	If the loss is a sub-exponential random variable with parameters $\alpha$ and $\beta$, with probability at least $1-\delta$ the following bound for unclipped MAVE holds:
	\begin{equation}
	\left| \CMAVEtrue(\vapprox,\vtrue)  -  \CMAVEapprox(\vapprox,\vsampled) \right| \le
	t  + \zeta.
	\end{equation}
	for $\zeta = 3 \rmax \left( \cfrac{1-\gamma^{\lenrollout}}{1-\gamma} \right) \cfrac{\log(6 \nsampledstates/ \delta)}{n} + \frac{\sum_{i=1}^{m}\std_i}{m} \sqrt{\cfrac{2\log(6 \nsampledstates/ \delta)}{n}} + \rmax \cfrac{\gamma^{\lenrollout}}{1-\gamma}$.
	
	Let $\sigma_1=\alpha \sqrt{\cfrac{2\log(4 \nexperiments/\delta)}{\nsampledstates}}$ and $\sigma_2=\cfrac{2\log(4 \nexperiments/\delta)}{\beta \nsampledstates}$.
	If $0<\sigma_1\leq \alpha^2 \beta$ and $0<\sigma_2\leq \alpha^2 \beta$, then $t=\sigma_1$. If $\sigma_1 > \alpha^2 \beta$ and $\sigma_2 > \alpha^2 \beta$, then $t=\sigma_2$.

\end{corollary}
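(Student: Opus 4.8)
The plan is to mirror the proof of Theorem~\ref{thm_CMAVE} almost verbatim, substituting the sub-exponential concentration inequality of Theorem~\ref{thm:subexpconcentration} for the Hoeffding step, since that is the only place where boundedness of the loss was used. First I would apply the triangle inequality to split
\begin{align*}
\left| \CMAVEtrue(\vapprox,\vtrue) - \CMAVEapprox(\vapprox,\vsampled) \right|
&\le \left| \CMAVEtrue(\vapprox,\vtrue) - \CMAVEapprox(\vapprox,\vtrue) \right| \\
&\quad + \left| \CMAVEapprox(\vapprox,\vtrue) - \CMAVEapprox(\vapprox,\vsampled) \right|,
\end{align*}
exactly as before. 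The second term is handled identically to Theorem~\ref{thm_CMAVE}: the per-state difference $\big|\,|\vapprox(s_i)-\vtrue(s_i)| - |\vapprox(s_i)-\vsampled(s_i)|\,\big|$ is at most $|\vtrue(s_i)-\vsampled(s_i)|$ by the reverse triangle inequality (the unclipped analogue of the max-based bound used in Theorem~\ref{thm_CMAVE}), which then splits into the empirical-Bernstein sampling error and the truncation bias controlled via Lemma~\ref{lem_discounted}, yielding the stated $\zeta$. Clipping played no role here beyond boundedness, so this part carries over unchanged.

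The substantive change is the first term. Since $\CMAVEapprox(\vapprox,\vtrue)$ is an average of $\nsampledstates$ i.i.d.\ copies of the sub-exponential loss $|\vapprox(S)-\vtrue(S)|$ with parameters $(\alpha,\beta)$, I would invoke Theorem~\ref{thm:subexpconcentration} with sample size $\nsampledstates$ and target failure probability $\delta/(2\nexperiments)$, reserving $\delta/2$ for the second term and distributing it across the $\nexperiments$ queries by a union bound. Inverting the two tail expressions of Theorem~\ref{thm:subexpconcentration} for the deviation $t$ is a routine calculation that produces precisely $t=\sigma_1$ from the Gaussian-like branch and $t=\sigma_2$ from the heavy-tail branch, matching the definitions in the statement.

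The main obstacle is selecting the correct regime self-consistently. Theorem~\ref{thm:subexpconcentration} supplies the $\sigma_1$-type bound only when the deviation satisfies $t \le \alpha^2\beta$, and the $\sigma_2$-type bound only when $t > \alpha^2\beta$; so after solving for $t$ I must verify that the chosen value actually lies in the regime under which it was derived, rather than blindly reading off a formula. This is exactly what the two hypotheses of the corollary guarantee: when both $\sigma_1,\sigma_2 \le \alpha^2\beta$ the Gaussian branch is active and $t=\sigma_1$ is admissible, whereas when both exceed $\alpha^2\beta$ the exponential tail governs and $t=\sigma_2$ is admissible. A final union bound over the two failure events, each of probability at most $\delta/2$, then yields the overall $1-\delta$ guarantee and completes the argument.
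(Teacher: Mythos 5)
Your proposal is correct and takes essentially the same route as the paper's proof: replace the Hoeffding step with Theorem \ref{thm:subexpconcentration} at per-query failure probability $\delta/(2\nexperiments)$ and union bound over the $\nexperiments$ queries, while the $\zeta$ term is inherited unchanged from Theorem \ref{thm_CMAVE} because that bounding technique (reverse triangle inequality, empirical Bernstein, and the truncation bound of Lemma \ref{lem_discounted}) never relied on clipping. Your explicit self-consistency check that the solved-for deviation lies in the tail regime under which it was derived is precisely what the paper compresses into ``based on the conditions for $\sigma_1$ and $\sigma_2$ to be valid, $t$ is chosen accordingly.''
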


\begin{proof}
	Let $X_i$ be the empirical loss for each state $i$ and the mean loss be $\mu$. Due to Theorem \ref{thm:subexpconcentration}, for $\nexperiments$ different empirical loss mean estimates and for $0 < \sigma_1 \leq \alpha^2 \beta$, setting ${Pr\left[ \left|\frac{1}{\nsampledstates}\sum^{\nsampledstates}_{i=1} X_i - \mu \right| \geq \sigma_1 \right] \leq 2e^{-\frac{\nsampledstates \sigma^2_1}{2 \alpha^2}} = \delta/2 \nexperiments}$, gives us $\sigma_1=\alpha \sqrt{\cfrac{2\log(4 \nexperiments/\delta)}{\nsampledstates}}$.
	
	For $\sigma_2 > \alpha^2 \beta$, setting $Pr \left[ \left|\frac{1}{\nsampledstates}\sum^{\nsampledstates}_{i=1} X_i- \mu \right| \geq \sigma_2 \right] \leq 2e^{-\frac{\nsampledstates \sigma_2 \beta}{2}} = \delta/2\nexperiments$, gives us $\sigma_2 = \cfrac{2\log(4 \nexperiments/\delta)}{\beta \nsampledstates}$. \\
	Based on the conditions for $\sigma_1$ and $\sigma_2$ to be valid, $t$ is chosen accordingly. Thus, using union bound over $\nexperiments$ empirical loss mean estimates, the total error probability is at most $\delta/2$. The rest of the results regarding $\zeta$ follows from Theorem $\ref{thm_CMAVE}$ since the bounding technique in its proof does not rely on clipping even though the loss is clipped. This later part gives an error probability of at most $\delta/2$ and so the total error probability is at most $\delta$.
\end{proof}

\subsection{SAMPLE COMPLEXITY ANALYSIS OF UNCLIPPED MSVE}

\begin{corollary} \label{corollary_MSVE}
	Let $\{ s_1, \ldots, s_\nsampledstates \}$ be states sampled I.I.D according to $\weighting$ and that the number of rollouts for each state be $n$. 
	
	If the loss is a sub-exponential random variable with parameters $\alpha$ and $\beta$, with probability at least $1-\delta$ the following bound for unclipped MSVE holds:
	\begin{equation}
	\left| \CMAVEtrue(\vapprox,\vtrue)  -  \CMAVEapprox(\vapprox,\vsampled) \right| \le
	t  + \zeta.
	\end{equation}
	for $\zeta = 3 \rmax^2 \left( \cfrac{1-\gamma^{\lenrollout}}{1-\gamma} \right)^2 \cfrac{\log(6 \nsampledstates/ \delta)}{n} + \frac{\sum_{i=1}^{m}\std_i}{m} \sqrt{\cfrac{2\log(6 \nsampledstates/ \delta)}{n}} + \rmax^2 \left( \cfrac{\gamma^{\lenrollout}}{1-\gamma} \right)^2$
	
	Let $\sigma_1=\alpha \sqrt{\cfrac{2\log(4 \nexperiments/\delta)}{\nsampledstates}}$ and $\sigma_2=\cfrac{2\log(4 \nexperiments/\delta)}{\beta \nsampledstates}$. If $0<\sigma_1\leq \alpha^2 \beta$ and $0<\sigma_2\leq \alpha^2 \beta$, then $t=\sigma_1$. If $\sigma_1 > \alpha^2 \beta$ and $\sigma_2 > \alpha^2 \beta$, then $t=\sigma_2$.   
	
\end{corollary}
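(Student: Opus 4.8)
The plan is to follow the proof of Corollary~\ref{corollary_MAVE} almost verbatim, since the only structural change relative to the clipped bound of Theorem~\ref{thm_CMSVE} is that Hoeffding's inequality, which controlled the state-sampling error there, is no longer available for the now-unbounded squared loss and must be replaced by the sub-exponential concentration bound of Theorem~\ref{thm:subexpconcentration}. Accordingly, I would begin by splitting the quantity of interest with the triangle inequality into a state-sampling term $\left|\CMSVEtrue(\vapprox,\vtrue)-\CMSVEapprox(\vapprox,\vtrue)\right|$ and a value-approximation term $\left|\CMSVEapprox(\vapprox,\vtrue)-\CMSVEapprox(\vapprox,\vsampled)\right|$, exactly as in Theorem~\ref{thm_CMSVE}.

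For the state-sampling term, let $X_i=(\vapprox(s_i)-\vtrue(s_i))^2$ denote the per-state squared loss with common mean $\mu$; by hypothesis each $X_i$ is sub-exponential with parameters $(\alpha,\beta)$, and $\CMSVEapprox(\vapprox,\vtrue)$ is their average over the $\nsampledstates$ states. I would apply Theorem~\ref{thm:subexpconcentration} with $n=\nsampledstates$, equating the failure probability to $\delta/(2\nexperiments)$ so that a union bound over the $\nexperiments$ queries costs only $\delta/2$ overall. Because that theorem has two regimes, solving $2e^{-\nsampledstates t^2/2\alpha^2}=\delta/(2\nexperiments)$ in the Gaussian-like regime and $2e^{-\nsampledstates t\beta/2}=\delta/(2\nexperiments)$ in the heavy-tailed regime yields $\sigma_1=\alpha\sqrt{2\log(4\nexperiments/\delta)/\nsampledstates}$ and $\sigma_2=2\log(4\nexperiments/\delta)/(\beta\nsampledstates)$ respectively, and I would take $t$ to be whichever of these is consistent with its regime condition $t\le\alpha^2\beta$ or $t>\alpha^2\beta$, matching the statement.

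For the value-approximation term, I would reuse the per-summand argument of Theorem~\ref{thm_CMSVE} unchanged, bounding $\left|\CMSVE(\vapprox(s_i),\vtrue(s_i))-\CMSVE(\vapprox(s_i),\vsampled(s_i))\right|$ by $|\vsampled(s_i)-\vsampledtrue(s_i)|^2+|\vsampledtrue(s_i)-\vtrue(s_i)|^2$; the crucial point is that this step never invoked the clip, so it transfers directly to the unclipped loss. The first summand is controlled by the empirical Bernstein inequality applied to returns of range $\rmax^2\left(\frac{1-\gamma^{\lenrollout}}{1-\gamma}\right)^2$, where the three-way split of that bound combined with a union bound over the $\nsampledstates$ states produces the $\log(6\nsampledstates/\delta)$ factor, while the truncation summand is bounded by Lemma~\ref{lem_discounted} as $\rmax^2\left(\frac{\gamma^{\lenrollout}}{1-\gamma}\right)^2$, together producing precisely the stated $\zeta$.

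The main difficulty lies entirely in the bookkeeping rather than in any new idea: one must check that the sub-exponential tail slots in where Hoeffding did without disturbing the $\zeta$ analysis, and that the two failure events—the state-sampling bound (probability $\le\delta/2$ after the union bound over the $\nexperiments$ queries) and the $\zeta$ bound (probability $\le\delta/2$ after the union bound over the $\nsampledstates$ states encoded in the $\log(6\nsampledstates/\delta)$ factor)—combine under a final union bound to give overall confidence at least $1-\delta$. The only genuine subtlety worth flagging is the two-regime case split inherited from Theorem~\ref{thm:subexpconcentration}, which is exactly why the claimed deviation $t$ must be stated piecewise in terms of $\sigma_1$ and $\sigma_2$.
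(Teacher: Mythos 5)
Your proposal is correct and follows essentially the same route as the paper: the paper's own proof is just a two-sentence pointer that reuses the $t$-selection argument from Corollary~\ref{corollary_MAVE} (sub-exponential concentration in place of Hoeffding, with the $\delta/(2\nexperiments)$ failure-probability split and union bound) and cites Theorem~\ref{thm_CMSVE} for the $\zeta$ term on the grounds that its bounding technique never used the clip. You have simply written out in full the steps the paper delegates to those references, with the same decomposition, the same regime case split for $\sigma_1$ versus $\sigma_2$, and the same final union bound.
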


\begin{proof}
	The same argument from Corollary \ref{corollary_MAVE} is applied here for choosing $t$ appropriately. Similarly, the rest of the results regarding $\zeta$ follows from Theorem $\ref{thm_CMSVE}$ since the bounding technique in its proof does not rely on clipping even though the loss is clipped. 
\end{proof}

	\section{ALGORITHM DETAILS}\label{app_algorithms}
	
	In this section, we provide additional details on the pseudocode in the main body, as well as providing the replacement for Algorithm \ref{alg_ebgstop} for other the losses discussed in the appendix. 
	
	\subsection{Sampling returns}
	
	To sample the returns to satisfy Assumption \ref{ass_one} for the discounted setting, we provide Algorithm \ref{alg_discounted}.
	We use the result in Lemma \ref{alg_discounted} to ensure Assumption \ref{ass_one} is satisfied. 
	
	\begin{algorithm}[t!]
		\caption{Sample truncated return to satisfy Assumption \ref{ass_one}}\label{alg_discounted}
		\begin{algorithmic}[1]
			\State $\triangleright$ Input $\epsilon,\delta, \relerror, \gamma$, state $s$
			\State $\triangleright$ Output a sampled return, $g$
			\State $\gammaprod = 1$
			\State $g \gets 0$
			\State $s_0 \gets s$
			\While{$\gammaprod > \errebstop (1-\gamma)/\rmax $}
			\State Sample next $s_{k+1}, r_{k+1}$, sampling the action according to $\pi( \cdot | s_k)$
			\State $g \gets g + \gammaprod r_{k+1}$
			\State $\gammaprod \gets \gammaprod \gamma$ 
			\EndWhile
			\Return $g$
		\end{algorithmic}
	\end{algorithm}
	
		\begin{algorithm}[t!]
		\caption{Empirical confidence interval using bootstrapping}
		\label{alg_opt}
		\begin{algorithmic}[1]
			\State $\triangleright$ Input number of sets to sample $k$ (e.g., k = 1000), and iteration $j$. 
			\State $\delta' \gets \frac{3}{(3/d_h)^{\alpha}} \cdot 100$  
			\State $D \gets$ randomly sample $k$ sets of size $j$ from the empirical distribution $\hat{F}$
			\State $\{g_1, \ldots, g_k\} \gets$ compute the means from the sets in $D$
			\State $c_{\delta'} \gets $ the $\delta'$'th percentile from $\{g_1, \ldots, g_k\}$
			\State $c_{100-\delta'} \gets $ the $(100 -\delta')$'th percentile from $\{g_1, \ldots, g_k\}$
			\State $c_j \gets \max(c_{\delta'}, c_{100-\delta'})$
			\State $\lb \gets \max(\lb, | \gbar | - c_j)$
			\State $\ub \gets \min(\ub, | \gbar | + c_j)$
		\end{algorithmic}
	\end{algorithm}
	
	There are a few other details that warrant explanation in the pseudocode for Algorithm \ref{alg_ebgstop}. The trajectory rollouts are of the appropriate lengths given by Lemma \ref{lem_discounted} to ensure the error due to truncation is sufficiently small. 
	For the empirical Bernstein inequality, we need to estimate the mean and variance of the sample truncated returns. We use a numerically stable approach to compute this sample mean and standard deviation, using Welford's algorithm \cite{welford1962note}.

\subsection{Sampling algorithm for CMAVE, CMSVE, MAVE and MSVE}

In this section, we present an incremental sampling algorithm (Algorithm \ref{alg_appLoss}) that can be used to sample states with their values and hence guarantee that the high probability errors of sub-exponential MAVE, MSVE and clipped MAVE, MSVE are bounded by a desired preset amount $\epsilon$. This algorithm would be called in Algorithm 1, in place of Algorithm \ref{alg_ebgstop}. Since a given error can be satisfied with different combinations of $\nsampledstates$ ---the number of sampled states---and $n$---the number of rollouts per state---one option for MSVE and MAVE is to pick $\nsampledstates$ such that the error contributed by the sub-exponential bound is atmost $\alpha^2 \beta$ to take advantage of the subgaussian tail decay. Such a choice corresponds to $\nsampledstates = \lceil \frac{2 \log(4 \nexperiments / \delta)}{\alpha^2 \beta^2} \rceil$. For CMAVE, CMSVE, we suggest fixing $m$ beforehand depending on $\cliperror$, $\epsilon$ and how costly it is to sample more rollouts compared to sampling states. We leave other selection criteria for future work.

\begin{algorithm}[t!]
	\caption{High confidence $\vsampled$ estimator for clipped losses}
	\label{alg_appLoss}
	\begin{algorithmic}[1]
		\State $\triangleright$ Input $\epsilon,\delta, \nsampledstates, \nexperiments, \alpha, \beta$
		\State $\triangleright$ Compute the values $\vsampled$ once offline and store for repeated use.
		\State $\triangleright$ If using CMAVE/MAVE, set $\vmax = \rmax \left( \cfrac{1-\gamma^{\lenrollout}}{1-\gamma} \right)$
		\State $\triangleright$ If using CMSVE/MSVE, set $\vmax = \rmax^2 \left( \cfrac{1-\gamma^{\lenrollout}}{1-\gamma} \right)^2$
		\State $\triangleright$ If using MAVE/MSVE, set $\nsampledstates = \lceil \frac{2 \log(4 \nexperiments / \delta)}{\alpha^2 \beta^2} \rceil$ and $\zeta \gets \epsilon - \alpha \sqrt{\cfrac{2\log(4 \nexperiments/\delta)}{\nsampledstates}}$
		\State $\triangleright$ Else for CMAVE/CMSVE: $\zeta \gets \epsilon - \sqrt{\frac{\log(4 \nexperiments/\delta) \cliperror^2}{2\nsampledstates}}$
		\State $\triangleright$ For states $i=1,..,m$ initialize:
		\State $\gbar_i \gets 0$, $M_i \gets 0$ 
		\State $j_i \gets 1$, $h_i \gets 0$, $\alpha_i \gets 1$, $x_i \gets 1$
		\State $\beta \gets 1.1$, $p \gets 1.1$, $\zeta \gets \epsilon - \sqrt{\frac{\log(4 \nexperiments/\delta) \cliperror^2}{2\nsampledstates}}$
		\While {True}
		\For {$i \in \{1,..,m\}$}
		\State $g_i \gets $ Sampled return from state $i$ of length $\lenrollout$
		\State $\Delta_i \gets g_i - \gbar_i$
		\State $\gbar_i \gets \gbar_i + \frac{\Delta_i}{j_i}$
		\State $M_i \gets M_i + \Delta_i (g_i - \gbar_i)$
		\State $\sigma_i \gets \sqrt{M_i/j_i}$
		\State $\triangleright$ Compute the confidence interval
		\If{$j_i \ge \floor{\beta^{h_i}}$}
		\State $h_i \gets h_i + 1$
		\State $\alpha_i \gets \floor{\beta^{h_i}}/\floor{\beta^{h_i-1}}$
		\State $x_i \gets -\alpha_i \log \cfrac{\delta(p-1)}{6 \nsampledstates p h_i^p} $  
		\EndIf
		\State $c_i \gets \sigma_i \sqrt{\frac{2x_i}{j_i}} + \frac{3 \vmax x_i}{ j_i}$
		\State $j_i = j_i+1$
		\EndFor
		\If{$\frac{\sum_{i=1}^{m} c_i}{m} \leq \zeta$}
		\State $\triangleright$ For all states $i=1,..,m$ :
		\State $\vsampled(i) \gets \gbar_i$
		\State \textbf{return} $\vsampled$
		\EndIf
		\EndWhile
	\end{algorithmic}
\end{algorithm}

\subsection{Computation of optimal intervals}
	For completeness, we include how we used bootstrapping to compute the intervals to provide a similar
	stopping rule to EBGStop. The algorithm is the same, except in how the confidence intervals are computed. 
	We first generate a large batch of data, to act as the empirical distribution. We could simply sample sets of size $j$ repeatedly, from the simulator, to get a sense of variability of sample averages. However, we choose to sample a very large batch of data upfront, to reduce the computational burden of the procedure. On each step, a large number $k$ of set of $j$ return samples are drawn, and their sample average computed to obtain the spread of values.
	Then the percentile corresponding to $\delta$ is computed, to provide a high-confidence estimate of a lower and an upper bound on the true values. 
	This approach to computing the true confidence interval is given in Algorithm \ref{alg_opt}.
	We sampled an batch of $10^7$ returns for each state, to provide the empirical distribution, and set $k = 1000$.
	
	This approach is not a suitable strategy to get high confidence estimates, because it requires a very large number of samples. Rather, we only used this strategy as a comparison, to provide a close approximation to the true confidence intervals, and so obtain best-case sampling numbers. This allowed us to evaluated the impact of the looseness of our bounds, in terms of how many extra samples are generated. 
	
\end{document}